\newtheorem{thm}{Theorem}[section]
\newtheorem{lem}[thm]{Lemma}
\title{Multimodal Generative Models for Scalable Weakly-Supervised Learning}
\author{
  Mike Wu \\
  Department of Computer Science \\
  Stanford University \\
  Stanford, CA 94025 \\
  \texttt{wumike@stanford.edu} \\
  \And
  Noah Goodman \\
  Departments of Computer Science and Psychology \\
  Stanford University \\
  Stanford, CA 94025 \\
  \texttt{ngoodman@stanford.edu}
}
\begin{document}

\maketitle

\begin{abstract}
Multiple modalities often co-occur when describing natural phenomena. Learning a joint representation of these modalities should yield deeper and more useful representations.
Previous generative approaches to multi-modal input
either do not learn a joint distribution or require additional computation to handle missing data.
Here, we introduce a multimodal variational autoencoder (MVAE) that uses a product-of-experts inference network and a sub-sampled training paradigm to solve the multi-modal inference problem.
Notably, our model shares parameters to efficiently learn under any combination of missing modalities.
We apply the MVAE on four datasets and match state-of-the-art performance using many fewer parameters.
In addition, we show that the MVAE is directly applicable to weakly-supervised learning, and is robust to incomplete supervision.
We then consider two case studies, one of learning image transformations---edge detection, colorization,  segmentation---as a set of modalities, followed by one of machine translation between two languages.
We find appealing results across this range of tasks.
\end{abstract}

\section{Introduction}
\label{sec:introduction}
Learning from diverse modalities has the potential to yield more  generalizable representations. For instance, the visual appearance and tactile impression of an object converge on a more invariant abstract characterization \cite{yildirim2014perception}.
Similarly, an image and a natural language caption can capture complimentary but converging information about a scene \cite{vinyals2015show, xu2015show}.
While fully-supervised deep learning approaches can learn to bridge modalities, generative approaches promise to capture the joint distribution across modalities and flexibly support missing data.
Indeed, multimodal data is \textit{expensive} and \textit{sparse}, leading to a \emph{weakly supervised} setting of having only a small set of examples with all observations present, but having access to a larger dataset with one (or a subset of) modalities.

We propose a novel multimodal variational autoencoder (MVAE) to learn a joint distribution under weak supervision. The VAE \cite{kingma2013auto} jointly trains a generative model, from latent variables to observations, with an \emph{inference network} from observations to latents. Moving to multiple modalities and missing data, we would naively need an inference network for each combination of modalities. However, doing so would result in an exponential explosion in the number of trainable parameters.
Assuming conditional independence among the modalities, we show that the correct inference network will be a product-of-experts \cite{hinton2006training}, a structure which reduces the number of inference networks to one per modality.
While the inference networks can be best trained separately, the generative model requires joint observations.
Thus we propose a sub-sampled training paradigm in which fully-observed examples are treated as both fully and partially observed (for each gradient update).
Altogether, this provides a novel and useful solution to the multi-modal inference problem.

We report experiments to measure the quality of the MVAE, comparing with previous models. We train on MNIST \cite{lecun1998gradient}, binarized MNIST \cite{larochelle2011neural}, MultiMNIST \cite{eslami2016attend, sabour2017dynamic}, FashionMNIST \cite{xiao2017fashion}, and CelebA \cite{liu2015faceattributes}. Several of these datasets have complex modalities---character sequences, RGB images---requiring large inference networks with RNNs and CNNs.
We show that the MVAE is able to support heavy encoders with thousands of parameters, matching state-of-the-art performance.

We then apply the MVAE to problems with more than two modalities. First, we revisit CelebA, this time fitting the model with each of the 18 attributes as an individual modality. Doing so, we find better performance from sharing of statistical strength. We further explore this question by choosing a handful of image transformations commonly studied in computer vision---colorization, edge detection, segmentation, etc.---and synthesizing a dataset by applying them to CelebA. We show that the MVAE can jointly learn these transformations by modeling them as modalities.

Finally, we investigate how the MVAE performs under incomplete supervision by reducing the number of multi-modal examples. We find that the MVAE is able to capture a good joint representation when only a small percentage of examples are multi-modal. To show real world applicability, we then investigate weak supervision on machine translation where each language is a modality.
\section{Methods}
\label{sec:methods}
A variational autoencoder (VAE) \cite{kingma2013auto} is a latent variable generative model of the form $p_{\theta}(x, z) = p(z)p_{\theta}(x|z)$ where $p(z)$ is a prior, usually spherical Gaussian. The decoder, $p_{\theta}(x|z)$, consists of a deep neural net, with parameters $\theta$, composed with a simple likelihood (e.g.~Bernoulli or Gaussian).
The goal of training is to maximize the marginal likelihood of the data (the ``evidence''); however since this is intractable, the evidence lower bound (ELBO) is instead optimized.
The ELBO is defined via an inference network, $q_{\phi}(z|x)$, which serves as a tractable importance distribution:
\begin{equation}
\label{eq:elbo}
    \textup{ELBO}(x) \triangleq \mathbb{E}_{q_{\phi}(z|x)}[\lambda \textup{ log }p_{\theta}(x|z)] - \beta \textup{ KL}[q_{\phi}(z|x), p(z)]
\end{equation}
where $\textup{KL}[p, q]$ is the Kullback-Leibler divergence between distributions $p$ and $q$; $\beta$ \cite{higgins2016beta} and $\lambda$ are weights balancing the terms in the ELBO. In practice, $\lambda=1$ and $\beta$ is slowly annealed to 1 \cite{bowman2015generating} to form a valid lower bound on the evidence.
The ELBO is usually optimized (as we will do here) via stochastic gradient descent, using the reparameterization trick to estimate the gradient \cite{kingma2013auto}.

\begin{figure}[h!]
\centering
    \begin{subfigure}[b]{.2\linewidth}
        \centering
        \includegraphics[width=.75\linewidth]{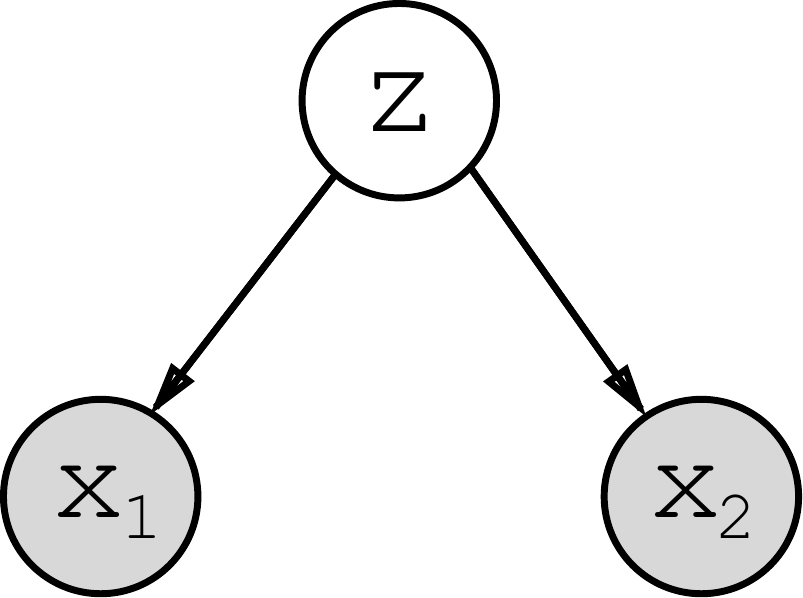}
        \caption{}
        \label{fig:diagram:graph}
    \end{subfigure}\hspace{5mm}
    \begin{subfigure}[b]{.2\linewidth}
        \includegraphics[width=\linewidth]{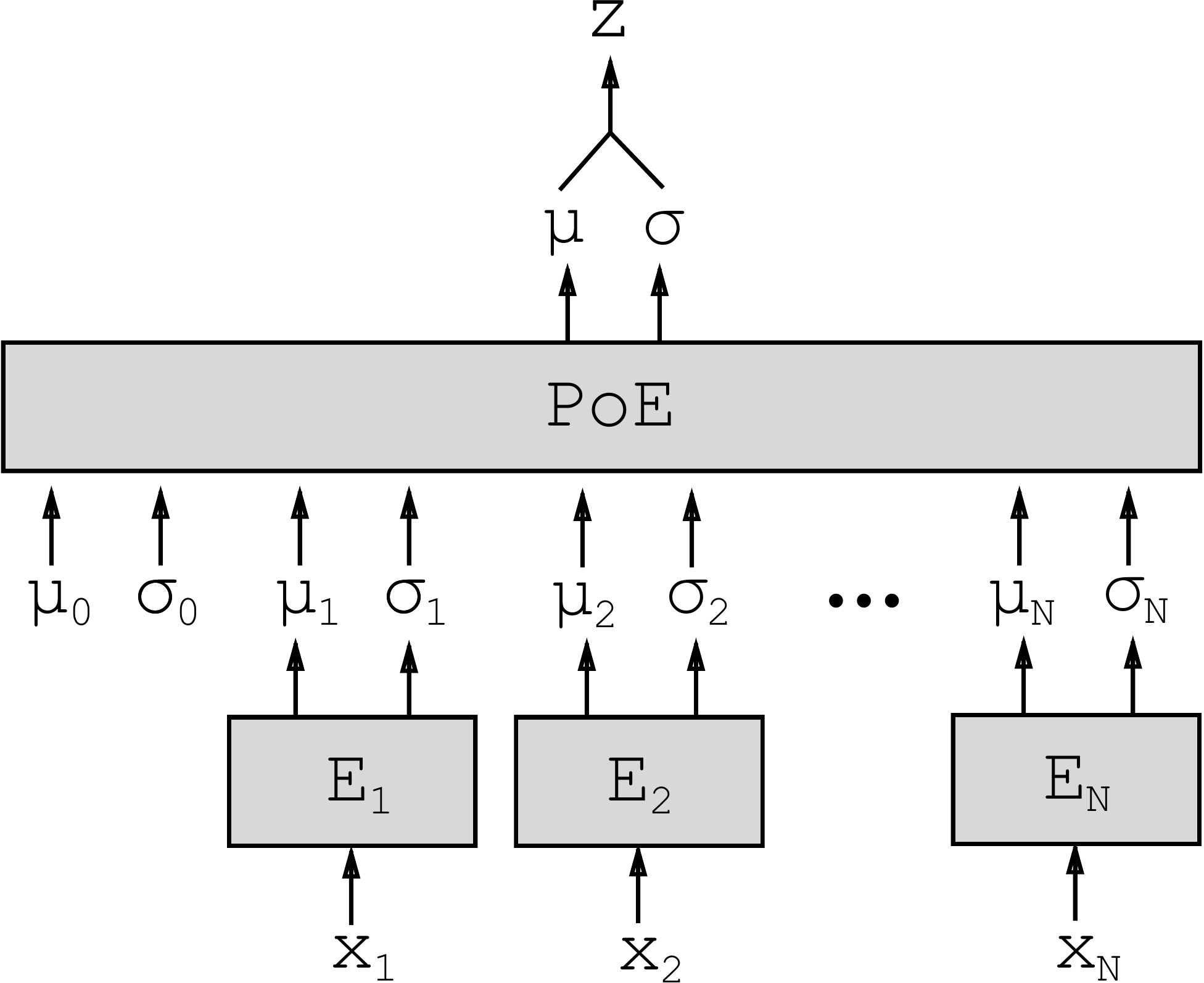}
        \caption{}
        \label{fig:diagram:modelv2}
    \end{subfigure}\hspace{5mm}
    \begin{subfigure}[b]{.23\linewidth}
        \includegraphics[width=\linewidth]{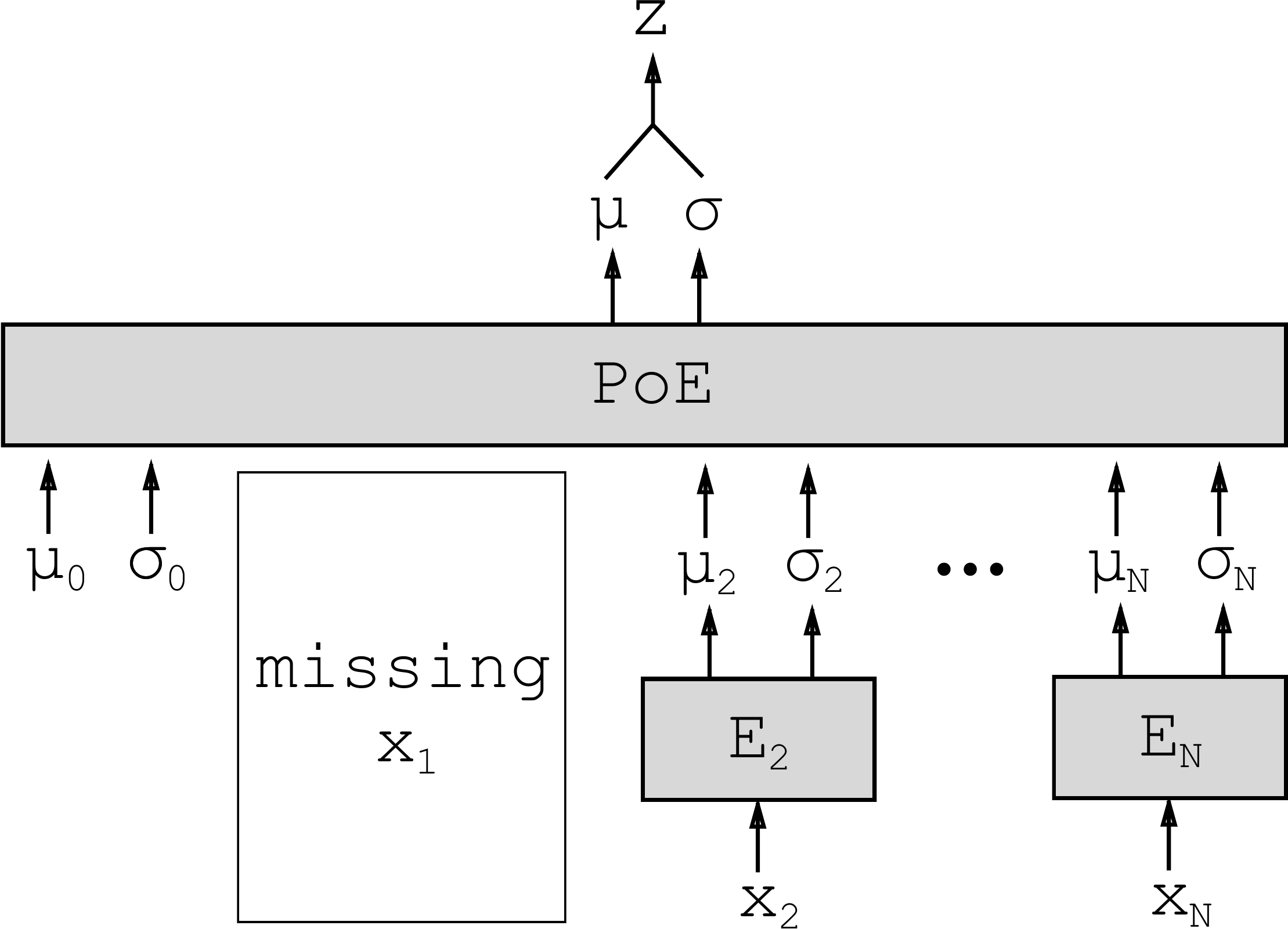}
        \caption{}
        \label{fig:diagram:modelv3}
    \end{subfigure}\hspace{5mm}
    \caption{(a) Graphical model of the MVAE. Gray circles represent observed variables. (b) MVAE architecture with $N$ modalities. $E_{i}$ represents the $i$-th inference network; $\mu_{i}$ and $\sigma_{i}$ represent the $i$-th variational parameters; $\mu_{0}$ and $\sigma_{0}$ represent the prior  parameters. The product-of-experts (PoE) combines all variational parameters in a principled and efficient manner. (c) If a modality is missing during training, we drop the respective inference network. Thus, the parameters of $E_{1}, ..., E_{N}$ are shared across different combinations of missing inputs.}
    \label{fig:diagram}
\end{figure}

In the multimodal setting we assume the $N$ modalities, $x_{1}$, ..., $x_{N}$, are conditionally independent given the common latent variable, $z$ (See Fig.~\ref{fig:diagram:graph}).
That is we assume a generative model of the form $p_{\theta}(x_{1}, x_{2}, ..., x_{N}, z) = p(z)p_{\theta}(x_{1}|z)p_{\theta}(x_{2}|z)\cdots p_{\theta}(x_{N}|z)$. With this factorization, we can ignore unobserved modalities when evaluating the marginal likelihood. If we write a data point as the collection of modalities present, that is $X=\{x_i | \text{$i^{th}$ modality present}\}$, then the ELBO becomes:
\begin{equation}
\label{eq:mmelbo}
    \textup{ELBO}(X) \triangleq \mathbb{E}_{q_{\phi}(z|X)}[\sum_{x_i \in X} \lambda_{i} \textup{ log }p_{\theta}(x_{i}|z)] - \beta \textup{ KL}[q_{\phi}(z|X), p(z)].
\end{equation}

\subsection{Approximating The Joint Posterior}

The first obstacle to training the MVAE is specifying the $2^N$ inference networks, $q(z|X)$ for each subset of modalities $X \subseteq \{x_1, x_2, ..., x_N\}$.
Previous work (e.g. \cite{suzuki2016joint, vedantam2017generative}) has assumed that the relationship between the joint- and single-modality inference networks is unpredictable (and therefore separate training is required).
However, the optimal inference network $q(z|x_1, ..., x_N)$ would be the true posterior $p(z|x_1, ..., x_N)$.
The conditional independence assumptions in the generative model imply a relation among joint- and single-modality posteriors:
\begin{equation}
\begin{aligned}
p(z|x_1, ..., x_N) & = \frac{p(x_1, ..., x_N|z)p(z)}{p(x_1, ..., x_N)} =  \frac{p(z)}{p(x_1, ..., x_N)}\prod_{i=1}^{N} p(x_i|z) \\
& = \frac{p(z)}{p(x_1, ..., x_N)}\prod_{i=1}^{N} \frac{p(z|x_i)p(x_i)}{p(z)}
 = \frac{\prod_{i=1}^{N}p(z|x_i)}{\prod_{i=1}^{N-1}p(z)} \cdot \frac{\prod_{i=1}^{N} p(x_i)}{p(x_1, ..., x_N)} \\
 &\propto \frac{\prod_{i=1}^{N}p(z|x_i)}{\prod_{i=1}^{N-1}p(z)} \\
\end{aligned}
\label{eqn:derive}
\end{equation}
That is, the joint posterior is a product of individual posteriors, with an additional quotient by the prior. If we assume that the true posteriors for each individual factor $p(z|x_i)$ is properly contained in the family of its variational counterpart\footnote{Without this assumption, the best approximation to a product of factors may not be the product of the best approximations for each individual factor. But, the product of $q(z|x_i)$ is still a tractable family of approximations.}, $q(z|x_i)$, then Eqn.~\ref{eqn:derive} suggests that the correct $q(z|x_1, ..., x_N)$ is a product and quotient of experts: $\frac{\prod_{i=1}^{N}q(z|x_i)}{\prod_{i=1}^{N-1}p(z)}$, which we call MVAE-Q.

Alternatively, if we approximate $p(z|x_i)$ with $q(z|x_i) \equiv \tilde{q}(z|x_i)p(z)$, where $\tilde{q}(z|x_i)$ is the underlying inference network, we can avoid the quotient term:
\begin{equation}
\begin{aligned}
p(z|x_1, ..., x_N)
\propto \frac{\prod_{i=1}^{N}p(z|x_i)}{\prod_{i=1}^{N-1}p(z)}
\approx \frac{\prod_{i=1}^{N}[\tilde{q}(z|x_i)p(z)]}{\prod_{i=1}^{N-1}p(z)}
= p(z)\prod_{i=1}^{N}\tilde{q}(z|x_i). \\
\end{aligned}
\label{eqn:derive:simple}
\end{equation}
In other words, we can use a product of experts (PoE), including a ``prior expert'', as the approximating distribution for the joint-posterior (Figure \ref{fig:diagram:modelv2}).
This representation is simpler and, as we describe below, numerically more stable.
This derivation is easily extended to any subset of modalities yielding $q(z|X) \propto p(z)\prod_{x_{i} \in X} \tilde{q}(z|x_{i})$ (Figure \ref{fig:diagram:modelv3}). We refer to this version as MVAE.

The product and quotient distributions required above are not in general solvable in closed form.
However, when $p(z)$ and $\tilde{q}(z|x_i)$ are Gaussian there is a simple analytical solution: a product of Gaussian experts is itself Gaussian \cite{cao2014generalized} with mean $\mu = (\sum_{i} \mu_{i}\text{T}_{i})(\sum_{i}\text{T}_{i})^{-1}$ and covariance $V = (\sum_{i} \text{T}_{i})^{-1}$, where $\mu_{i}$, $V_{i}$ are the parameters of the $i$-th Gaussian expert, and $\text{T}_{i} = V_{i}^{-1}$ is the inverse of the covariance.
Similarly, given two Gaussian experts, $p_1(x)$ and $p_2(x)$, we can show that the quotient (QoE), $\frac{p_1(x)}{p_2(x)}$, is also a Gaussian with mean $\mu = (\text{T}_{1}\mu_{1} - \text{T}_{2}\mu_{2})(\text{T}_1 - \text{T}_2)^{-1}$ and covariance $V = (\text{T}_1 - \text{T}_2)^{-1}$, where $T_i = V_{i}^{-1}$.
However, this distribution is well-defined only if $V_{2} > V_{1}$ element-wise---a simple constraint that can be hard to deal with in practice. A full derivation for PoE and QoE can be found in the supplement.

Thus we can compute all $2^N$ multi-modal inference networks required for MVAE efficiently in terms of the $N$ uni-modal components, $\tilde{q}(z|x_{i})$; the additional quotient needed by the MVAE-Q variant is also easily calculated but requires an added constraint on the variances.

\subsection{Sub-sampled Training Paradigm}

On the face of it, we can now train the MVAE by simply optimizing the evidence lower bound given in Eqn.~\ref{eq:mmelbo}.
However, a product-of-Gaussians does not uniquely specify its component Gaussians. Hence, given a \textit{complete} dataset, with no missing modalities, optimizing Eqn.~\ref{eq:mmelbo} has an unfortunate consequence: we never train the individual inference networks (or small sub-networks) and thus do not know how to use them if presented with missing data at test time.
Conversely, if we treat every observation as independent observations of each modality, we can adequately train the inference networks $\tilde{q}(z|x_{i})$, but will fail to capture the relationship between modalities in the generative model.

We propose instead a simple training scheme that combines these extremes, including ELBO terms for whole and partial observations. For instance, with $N$ modalities, a complete example, $\{x_{1}, x_{2}, ..., x_{N}\}$ can be split into $2^{N}$ partial examples: $\{x_{1}\}$, $\{x_{2}, x_{6}\}$, $\{x_{5}, x_{N-4}, x_{N}\}$, .... If we were to train using all $2^N$ subsets it would require evaluating $2^N$ ELBO terms. This is computationally intractable. To reduce the cost, we sub-sample which ELBO terms to optimize for every gradient step. Specifically, we choose (1) the ELBO using the product of all $N$ Gaussians, (2) all ELBO terms using a single modality, and (3) $k$ ELBO terms using $k$ randomly chosen subsets, $X_k$. For each minibatch, we thus evaluate a random subset of the $2^{N}$ ELBO terms. In expectation, we will be approximating the full objective. The sub-sampled objective can be written as:
\begin{equation}
  \textup{ELBO}(x_1, ..., x_N) + \sum_{i=1}^{N}\textup{ELBO}(x_i) + \sum_{j=1}^{k}\textup{ELBO}(X_j)
  \label{eqn:subsample-elbo}
\end{equation}
We explore the effect of $k$ in Sec.~\ref{sec:results}. A pleasant side-effect of this training scheme is that it generalizes to weakly-supervised learning. Given an example with missing data, $X = \{x_i | \text{$i^{th}$ modality present}\}$, we can still sample partial data from $X$, ignoring modalities that are missing.

\section{Related Work}
\label{sec:related_work}

Given \textit{two} modalities, $x_{1}$ and $x_{2}$, many variants of VAEs \cite{kingma2013auto, kingma2014semi}  have been used to train generative models of the form $p(x_{2} | x_{1})$, including conditional VAEs (CVAE) \cite{sohn2015learning} and conditional multi-modal autoencoders (CMMA) \cite{pandey2017variational}. Similar work has explored using hidden features from a VAE trained on images to generate captions, even in the weakly supervised setting \citep{pu2016variational}. Critically, these models are not bi-directional. We are more interested in studying models where we can condition interchangeably. For example, the BiVCCA \cite{wang2016deep} trains two VAEs together with interacting inference networks to facilitate two-way reconstruction. However, it does not attempt to directly model the joint distribution, which we find empirically to improve the ability of a model to learn the data distribution.

Several recent models have tried to capture the joint distribution explicitly. \cite{suzuki2016joint} introduced the joint multi-modal VAE (JMVAE), which learns $p(x_{1}, x_{2})$ using a joint inference network, $q(z|x_{1}, x_{2})$. To handle missing data at test time, the JMVAE collectively trains $q(z|x_{1}, x_{2})$ with two other inference networks $q(z|x_{1})$ and $q(z|x_{2})$. The authors use an ELBO objective with two additional divergence terms to minimize the distance between the uni-modal and the multi-modal importance distributions. Unfortunately, the JMVAE trains a new inference network for each multi-modal subset, which we have previously argued in Sec.~\ref{sec:methods} to be intractable in the general setting.

Most recently, \cite{vedantam2017generative} introduce another objective for the bi-modal VAE, which they call the \textit{triplet ELBO}. Like the MVAE, their model's joint inference network $q(z|x_{1}, x_{2})$ combines variational distributions using a product-of-experts rule. Unlike the MVAE, the authors report a two-stage training process: using complete data, fit $q(z|x_{1}, x_{2})$ and the decoders. Then, freezing $p(x_{1}|z)$ and $p(x_{2}|z)$, fit the uni-modal inference networks, $q(z|x_{1})$ and $q(z|x_{2})$ to handle missing data at test time. Crucially, because training is separated, the model has to fit 2 new inference networks to handle all combinations of missing data in stage two. While this paradigm is sufficient for two modalities, it does not generalize to the truly multi-modal case. To the best of our knowledge, the MVAE is the first deep generative model to explore more than two modalities efficiently. Moreover, the single-stage training of the MVAE makes it uniquely applicable to weakly-supervised learning.

Our proposed technique resembles established work in several ways. For example, PoE is reminiscent of a restricted Boltzmann machine (RBM), another latent variable model that has been applied to multi-modal learning \citep{ngiam2011multimodal, srivastava2012multimodal}. Like our inference networks, the RBM decomposes the posterior into a product of independent components. The benefit that a MVAE offers over a RBM is a simpler training algorithm via gradient descent rather than requiring contrastive divergence, yielding faster models that can handle more data. Our sub-sampling technique is somewhat similar to denoising \citep{vincent2008extracting, ngiam2011multimodal} where a subset of inputs are ``partially destructed" to encourage robust representations in autoencoders. In our case, we can think of ``robustness" as capturing the true marginal distributions.

\section{Experiments}
\label{sec:experiments}
As in previous literature, we transform uni-modal datasets into multi-modal problems by treating labels as a second modality. We compare existing models (VAE, BiVCCA, JMVAE) to the MVAE and show that we equal state-of-the-art performance on four image datasets: MNIST, FashionMNIST, MultiMNIST, and CelebA. For each dataset, we keep the network architectures consistent across models, varying only the objective and training procedure. Unless otherwise noted, given images $x_1$ and labels $x_2$, we set $\lambda_1 = 1$ and $\lambda_2 = 50$. We find that upweighting the reconstruction error for the low-dimensional modalities is important for learning a good joint distribution.

\begin{table}[h]
\centering
\small
\begin{tabular}{ l|c|c|c|c|c }
    \toprule
    Model & BinaryMNIST & MNIST & FashionMNIST & MultiMNIST & CelebA \\
    \hline
    VAE & 730240 & 730240 & 3409536 & 1316936 & 4070472 \\
    CVAE & 735360 & 735360 & 3414656 & -- & 4079688 \\
    BiVCCA & 1063680 & 1063680 & 3742976 & 1841936 & 4447504 \\
    JMVAE & 2061184 & 2061184 & 7682432 & 4075064 & 9052504 \\
    MVAE-Q & 1063680 & 1063680 & 3742976 & 1841936 & 4447504 \\
    MVAE & 1063680 & 1063680 & 3742976 & 1841936 & 4447504 \\
    JMVAE19 & -- & -- & -- & -- & 3.6259e12 \\
    MVAE19 & -- & -- & -- & -- & 10857048 \\
    \bottomrule
\end{tabular}
\caption{\textit{Number of inference network parameters.} For a single dataset, each generative model uses the same inference network architecture(s) for each modality. Thus, the difference in parameters is solely due to how the inference networks interact in the model. We note that MVAE has the same number of parameters as BiVCCA. JMVAE19 and MVAE19 show the number of parameters using 19 inference networks when each of the attributes in CelebA is its own modality.}
\label{table:parameters}
\end{table}

Our version of MultiMNIST contains between 0 and 4 digits composed together on a 50x50 canvas. Unlike \cite{eslami2016attend}, the digits are fixed in location. We generate the second modality by concatenating digits from top-left to bottom-right to form a string. As in literature, we use a RNN encoder and decoder \citep{bowman2015generating}. Furthermore, we explore two versions of learning in CelebA, one where we treat the 18 attributes as a single modality, and one where we treat each attribute as its own modality for a total of 19. We denote the latter as MVAE19. In this scenario, to approximate the full objective, we set $k = 1$ for a total 21 ELBO terms (as in Eqn.~\ref{eqn:subsample-elbo}). For complete details, including training hyperparameters and encoder/decoder architecture specification, refer to the supplement.

\section{Evaluation}

In the bi-modal setting with $x_1$ denoting the image and $x_2$ denoting the label, we measure the test marginal log-likelihood, $\textup{log }p(x_{1})$, and test joint log-likelihood $\textup{log }p(x_1, x_2)$ using $100$ importance samples in CelebA and $1000$ samples in other datasets. In doing so, we have a choice of which inference network to use. For example, using $q(z|x_1)$, we estimate $\textup{log } p(x_1) \approx \textup{log } \mathbb{E}_{q(z|x_1)}[ \frac{p(x_1|z)p(z)}{q(z|x_1)} ]$. We also compute the test conditional log-likelihood $\textup{log }p(x_1|x_2)$, as a measure of classification performance, as done in \cite{suzuki2016joint}: $\textup{log } p(x_1|x_2) \approx \textup{log } \mathbb{E}_{q(z|x_2)}[ \frac{p(x_1|z)p(x_2|z)p(z)}{q(z|x_2)}] - \textup{log }\mathbb{E}_{p(z)}[p(x_2|z)]$.
In CelebA, we use 1000 samples to estimate $\mathbb{E}_{p(z)}[p(x_2|z)]$. In all others, we use 5000 samples. These marginal probabilities measure the ability of the model to capture the data distribution and its conditionals. Higher scoring models are better able to generate proper samples and convert between modalities, which is exactly what we find desirable in a generative model.

\paragraph{Quality of the Inference Network} In all VAE-family models, the inference network functions as an importance distribution for approximating the intractable posterior. A better importance distribution, which more accurately approximates the posterior, results in importance weights with lower variance. Thus, we estimate the variance of the (log) importance weights as a measure of inference network quality (see Table~\ref{table:x_variances}).

\begin{table}[tb]
\centering
\small
\begin{tabular}{ l|c|c|c|c|c }
    \toprule
    Model & BinaryMNIST & MNIST & FashionMNIST & MultiMNIST & CelebA \\
    \hline
    \multicolumn{6}{c}{Estimated $\textup{log }p(x_{1})$} \\
    \hline
    VAE & -86.313 & -91.126 & -232.758 & -152.835 & -6237.120 \\
    BiVCCA & -87.354 & -92.089 & -233.634 & -202.490 & -7263.536 \\
    JMVAE & -86.305 & -90.697 & -232.630 & -152.787 & -6237.967 \\
    MVAE-Q & -91.665 & -96.028 & -236.081 & -166.580 & -6290.085 \\
    MVAE & \textbf{-86.026} & \textbf{-90.619} & \textbf{-232.535} & \textbf{-152.761} & -6236.923 \\
    MVAE19 & -- & -- & -- & -- & \textbf{-6236.109} \\
    \hline
    \multicolumn{6}{c}{Estimated $\textup{log }p(x_{1},x_{2})$} \\
    \hline
    JMVAE & -86.371 & \textbf{-90.769} & \textbf{-232.948} & \textbf{-153.101} & -6242.187 \\
    MVAE-Q & -92.259 & -96.641 & -236.827 & -173.615 & -6294.861 \\
    MVAE & \textbf{-86.255} & -90.859 & -233.007 & -153.469 & -6242.034 \\
    MVAE19 & -- & -- & -- & -- & \textbf{-6239.944} \\
    \hline
    \multicolumn{6}{c}{Estimated $\textup{log }p(x_{1}|x_{2})$} \\
    \hline
    CVAE & \textbf{-83.448} & \textbf{-87.773} & \textbf{-229.667} & -- & -6228.771 \\
    JMVAE & -83.985 & -88.696 & -230.396 & \textbf{-145.977} & \textbf{-6231.468} \\
    MVAE-Q & -90.024 & -94.347 & -234.514 & -163.302 & -6311.487 \\
    MVAE & -83.970 & -88.569 & -230.695 & -147.027 & -6234.955 \\
    MVAE19 & -- & -- & -- & -- & -6233.340 \\
    \bottomrule
\end{tabular}
\caption{Estimates (using $q(z|x_{1})$) for marginal probabilities on the average test example. MVAE and JMVAE are roughly equivalent in data log-likelihood but as Table \ref{table:parameters} shows, MVAE uses far fewer parameters. The CVAE is often better at capturing $p(x_{1}|x_{2})$ but does not learn a joint distribution.}
\label{table:x_results}
\end{table}

\label{sec:results}

\begin{figure}[h!]
\centering
  \begin{subfigure}[b]{.22\linewidth}
    \centering
    \includegraphics[width=\linewidth]{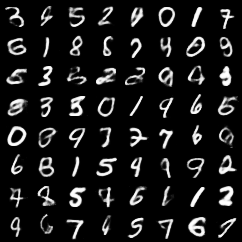}
    \caption{}
  \end{subfigure}
  \begin{subfigure}[b]{.22\linewidth}
    \centering
    \includegraphics[width=\linewidth]{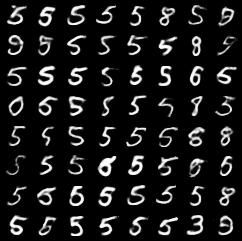}
    \caption{}
  \end{subfigure}
  \begin{subfigure}[b]{.22\linewidth}
    \includegraphics[width=\linewidth]{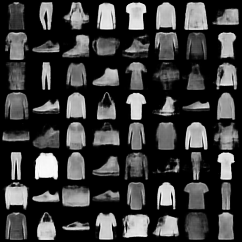}
    \caption{}
  \end{subfigure}
  \begin{subfigure}[b]{.22\linewidth}
    \includegraphics[width=\linewidth]{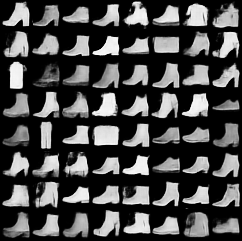}
    \caption{}
  \end{subfigure}

  \begin{subfigure}[b]{.22\linewidth}
    \includegraphics[width=\linewidth]{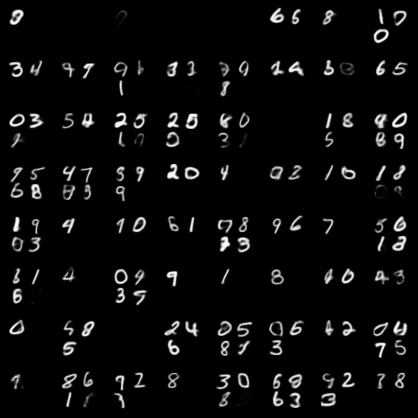}
      \caption{}
  \end{subfigure}
  \begin{subfigure}[b]{.22\linewidth}
    \includegraphics[width=\linewidth]{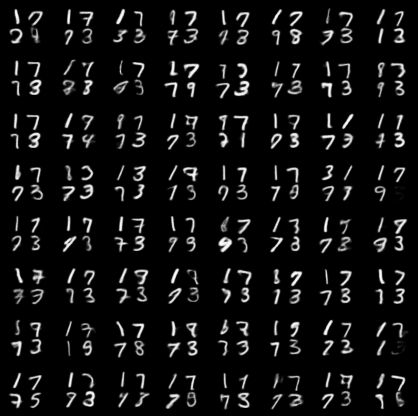}
    \caption{}
  \end{subfigure}
  \begin{subfigure}[b]{.22\linewidth}
    \includegraphics[width=\linewidth]{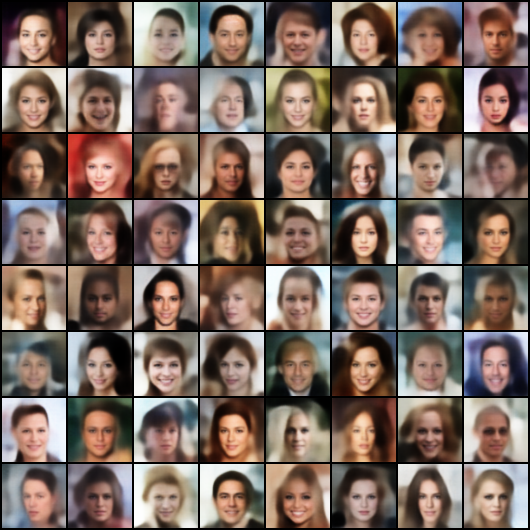}
    \caption{}
  \end{subfigure}
  \begin{subfigure}[b]{.22\linewidth}
    \includegraphics[width=\linewidth]{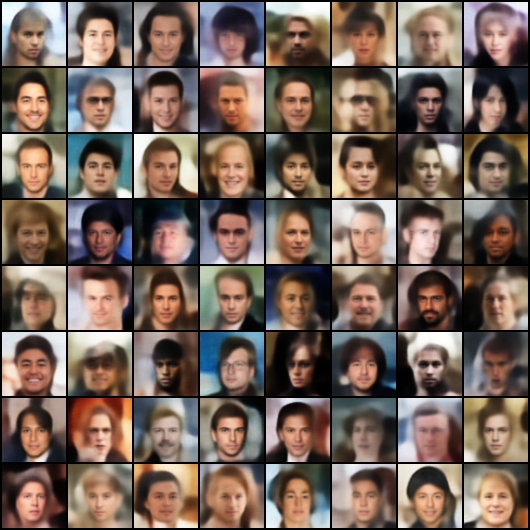}
    \caption{}
  \end{subfigure}

  \caption{\textit{Image samples using MVAE}. (a, c, e, g) show 64 images per dataset by sampling $z \sim p(z)$ and then generating via $p(x_1|z)$. Similarly, (b, d, f, h) show conditional image reconstructions by sampling $ z \sim q(z|x_2)$ where (b) $x_2=5$, (d) $x_2=\textup{Ankle boot}$, (f) $x_2=1773$, (h) $x_2=\textup{Male}$.}
  \label{fig:samples}
\end{figure}
Fig.~\ref{fig:samples} shows image samples and conditional image samples for each dataset using the image generative model.
We find the samples to be good quality, and find conditional samples to be largely correctly matched to the target label.
Table~\ref{table:x_results} shows test log-likelihoods for each model and dataset.\footnote{These results used $q(z|x_1)$ as the importance distribution. See supplement for similar results using $q(z|x_1,x_2)$. Because importance sampling with either $q(z|x_1)$ or $q(z|x_1,x_2)$ yields an unbiased estimator of marginal likelihood, we expect the log-likelihoods to agree asymptotically.} We see that MVAE performs on par with the state-of-the-art (JMVAE) while using far fewer parameters (see Table~\ref{table:parameters}).
When considering only $p(x_1)$ (i.e.~the likelihood of the image modality alone), the MVAE also performs best, slightly beating even the image-only VAE, indicating that solving the harder multi-modal problem does not sacrifice any uni-modal model capacity and perhaps helps. On CelebA, MVAE19 (which treats features as independent modalities) out-performs the MVAE (which treats the feature vector as a single modality). This suggests that the PoE approach generalizes to a larger number of modalities, and that jointly training shares statistical strength. Moreover, we show in the supplement that the MVAE19 is robust to randomly dropping modalities.

Tables \ref{table:x_variances} show variances of log importance weights.
The MVAE always produces lower variance than other methods that capture the joint distribution, and often lower than conditional or single-modality models.
Furthermore, MVAE19 consistently produces lower variance than MVAE in CelebA.
Overall, this suggests that the PoE approach used by the MVAE yields better inference networks.

\begin{table}[h!]
\centering
\small
\begin{tabular}{ l|c|c|c|c|c }
    \toprule
    Model & BinaryMNIST & MNIST & FashionMNIST & MultiMNIST & CelebA \\
    \hline
    \multicolumn{6}{c}{Variance of Marginal Log Importance Weights: $\textup{var}(\textup{log}(\frac{p(x_1,z)}{q(z|x_1)}))$} \\
    \hline
    VAE & 22.264 & 26.904 & 25.795 & 54.554 & \textbf{56.291} \\
    BiVCCA & 55.846 & 93.885 & 33.930 & 185.709 & 429.045 \\
    JMVAE & 39.427 & 37.479 & 53.697 & 84.186 & 331.865 \\
    MVAE-Q & 34.300 & 37.463 & 34.285 & 69.099 & 100.072 \\
    MVAE & \textbf{22.181} & \textbf{25.640} & \textbf{20.309} & \textbf{26.917} & 73.923 \\
    MVAE19 & -- & -- & -- & -- & 71.640 \\
    \hline
    \multicolumn{6}{c}{Variance of Joint Log Importance Weights: $\textup{var}(\textup{log}(\frac{p(x_1,x_2,z)}{q(z|x_1)}))$} \\
    \hline
    JMVAE & 41.003 & 40.126 & 56.640 & 91.850 & 334.887 \\
    MVAE-Q & 34.615 & 38.190 & 34.908 & 64.556& 101.238\\
    MVAE & \textbf{23.343} & \textbf{27.570} & \textbf{20.587} & \textbf{27.989} & 76.938 \\
    MVAE19 & -- & -- & -- & -- & \textbf{72.030} \\
    \hline
    \multicolumn{6}{c}{Variance of Conditional Log Importance Weights: $\textup{var}(\textup{log}(\frac{p(x_1,z|x_2)}{q(z|x_1)}))$} \\
    \hline
    CVAE & 21.203 & \textbf{22.486} & \textbf{12.748} & -- & \textbf{56.852} \\
    JMVAE & 23.877 & 26.695 & 26.658 & 37.726 & 81.190 \\
    MVAE-Q & 34.719 & 38.090 & 34.978 & 44.269 & 101.223 \\
    MVAE & \textbf{19.478} & 25.899 & 18.443 & \textbf{16.822} & 73.885 \\
    MVAE19 & -- & -- & -- & -- & 71.824 \\
    \bottomrule
\end{tabular}
\caption{Average variance of log importance weights for three marginal probabilities, estimated by importance sampling from $q(z|x_1)$. 1000 importance samples were used to approximate the variance. The lower the variance, the better quality the inference network.}
\label{table:x_variances}
\end{table}
\paragraph{Effect of number of ELBO terms} In the MVAE training paradigm, there is a hyperparameter $k$ that controls the number of sampled ELBO terms to approximate the intractable objective. To investigate its importance, we vary $k$ from 0 to 50 and for each, train a MVAE19 on CelebA. We find that increasing $k$ has little effect on data log-likelihood but reduces the variance of the importance distribution defined by the inference networks. In practice, we choose a small $k$ as a tradeoff between computation and a better importance distribution. See supplement for more details.

\begin{figure}[h!]
\centering
  \begin{subfigure}[b]{\linewidth}
    \centering
    \includegraphics[width=\linewidth]{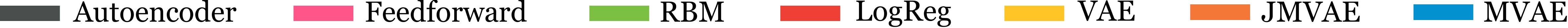}
  \end{subfigure}
  \begin{subfigure}[b]{.32\linewidth}
    \centering
    \includegraphics[width=\linewidth]{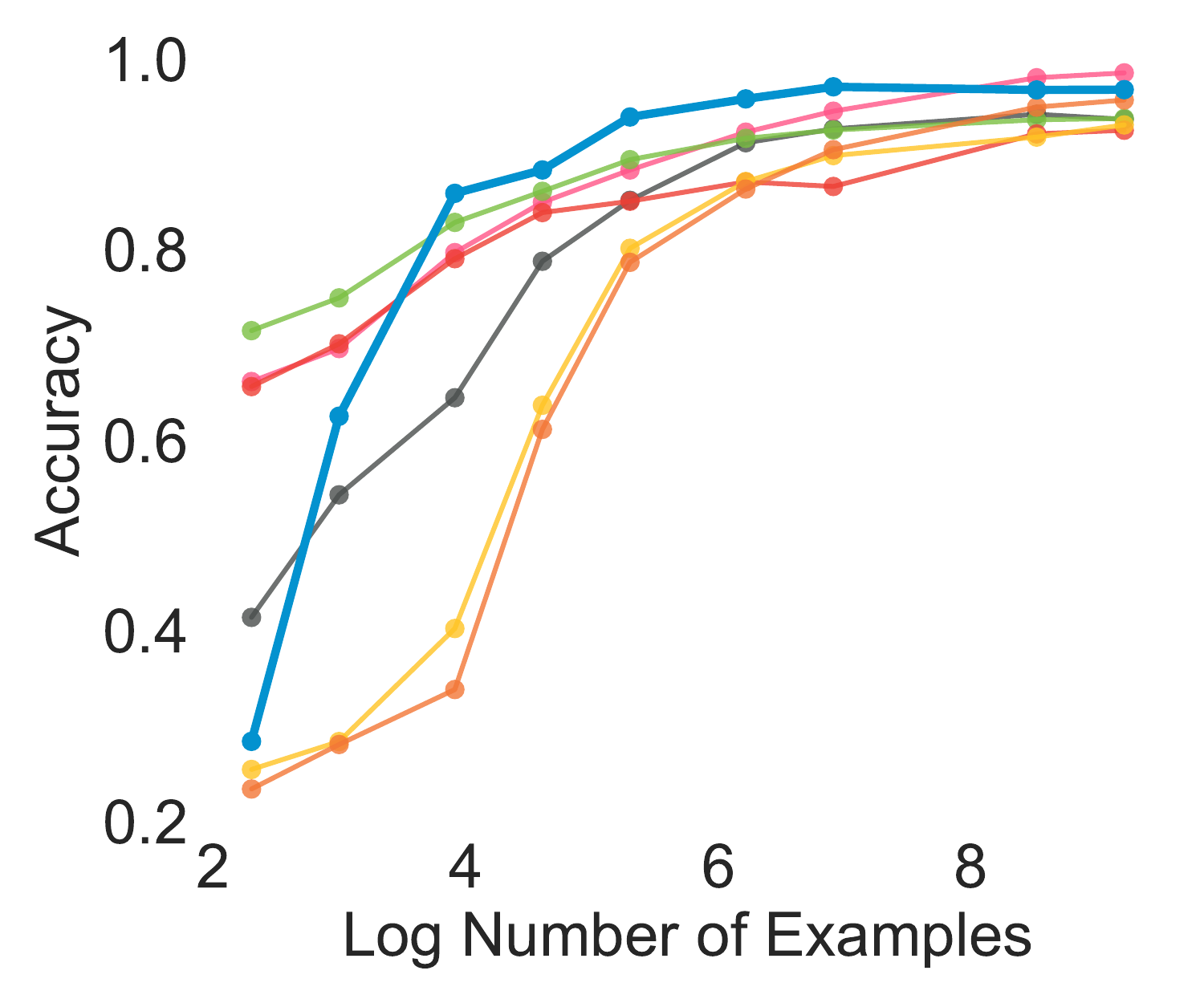}
    \caption{Dynamic MNIST}
  \end{subfigure}
  \begin{subfigure}[b]{.32\linewidth}
    \centering
    \includegraphics[width=\linewidth]{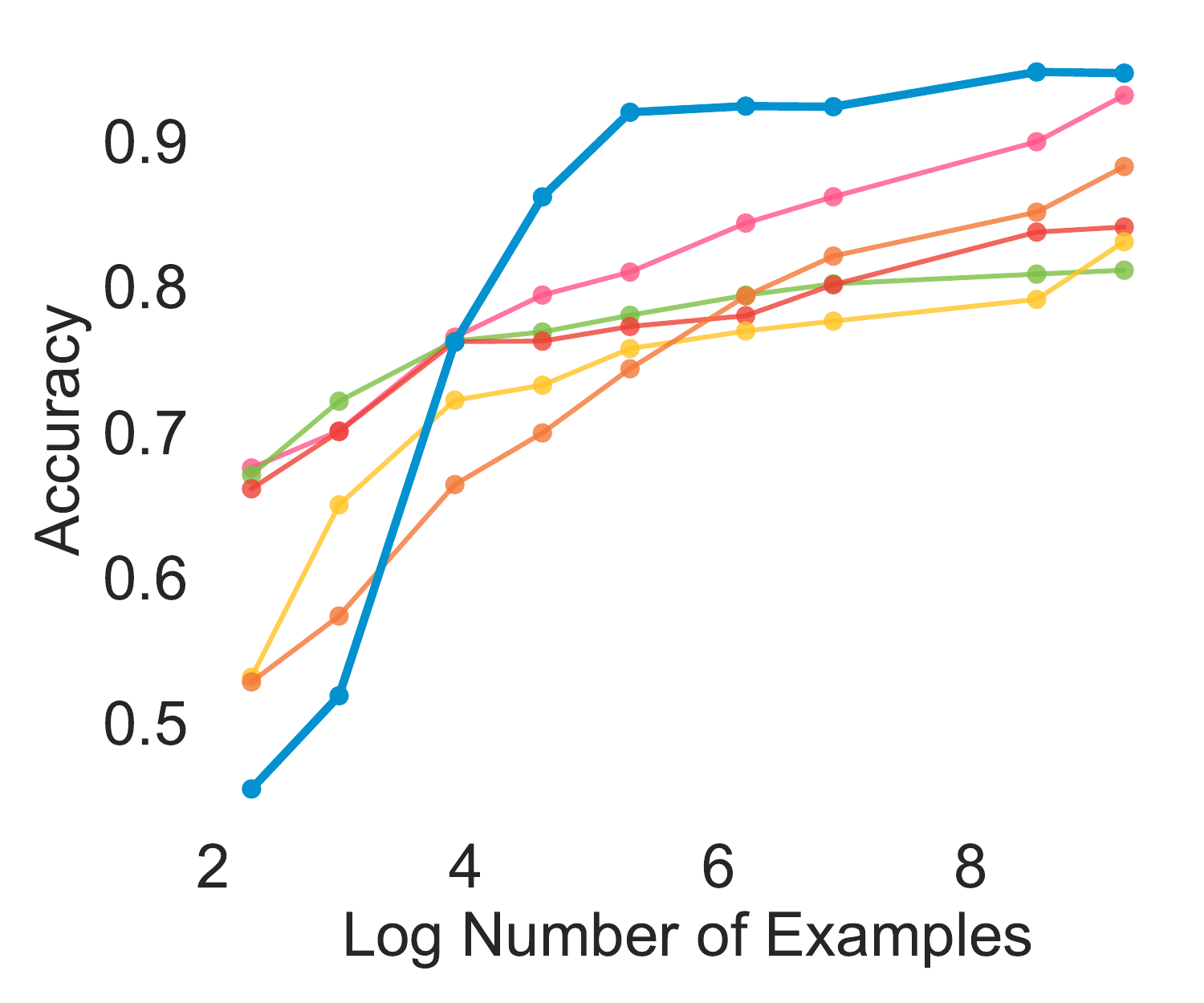}
    \caption{FashionMNIST}
  \end{subfigure}
  \begin{subfigure}[b]{.32\linewidth}
    \centering
    \includegraphics[width=\linewidth]{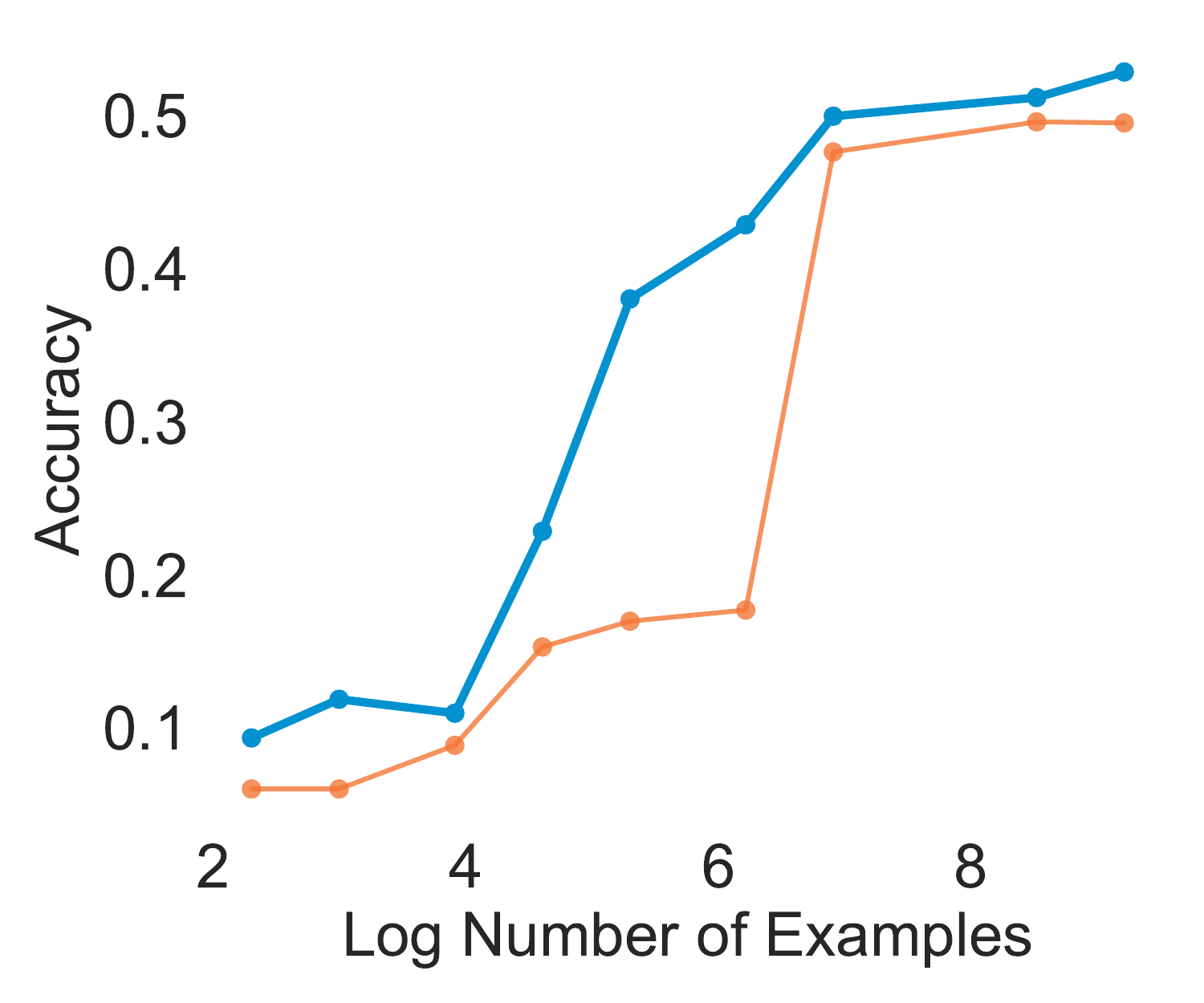}
    \caption{MultiMNIST}
  \end{subfigure}
  \caption{\textit{Effects of supervision level}. We plot the level of supervision as the log number of paired examples shown to each model. For MNIST and FashionMNIST, we predict the target class. For MultiMNIST, we predict the correct string representing each digit. We compare against a suite of baselines composed of models in relevant literature and commonly used classifiers. MVAE consistently beats all baselines in the \textit{middle region} where there is both enough data to fit a deep model; in the fully-supervised regime, MVAE is competitive with feedforward deep networks. See supplement for accuracies.}
  \label{fig:weaksup_prediction}
\end{figure}

\subsection{Weakly Supervised Learning}
For each dataset, we simulate incomplete supervision by randomly reserving a fraction of the dataset as multi-modal examples. The remaining data is split into two datasets: one with only the first modality, and one with only the second. These are shuffled to destroy any pairing. We examine the effect of supervision on the predictive task $p(x_2|x_1)$, e.g.~predict the correct digit label, $x_2$, from an image $x_1$. For the MVAE, the total number of examples shown to the model is always fixed -- only the proportion of complete bi-modal examples is varied. We compare the performance of the MVAE against a suite of baseline models: (1) supervised neural network using the same architectures (with the stochastic layer removed) as in the MVAE; (2) logistic regression on raw pixels; (3) an autoencoder trained on the full set of images, followed by logistic regression on a subset of paired examples; we do something similar for (4) VAEs and (5) RBMs, where the internal latent state is used as input to the logistic regression; finally (6) we train the JMVAE ($\alpha=0.01$ as suggested in \citep{suzuki2016joint}) on the subset of paired examples. Fig.~\ref{fig:weaksup_prediction} shows performance as we vary the level of supervision. For MultiMNIST, $x_2$ is a string (e.g. ``6 8 1 2") representing the numbers in the image. We only include JMVAE as a baseline since it is not straightforward to output raw strings in a supervised manner.

We find that the MVAE surpasses all the baselines on a middle region when there are enough paired examples to sufficiently train the deep networks but not enough paired examples to learn a supervised network. This is especially emphasized in FashionMNIST, where the MVAE equals a fully supervised network even with two orders of magnitude less paired examples (see Fig.~\ref{fig:weaksup_prediction}). Intuitively, these results suggest that the MVAE can effectively learn the joint distribution by bootstrapping from a larger set of uni-modal data. A second observation is that the MVAE almost always performs better than the JMVAE. This discrepancy is likely due to directly optimizing the marginal distributions rather than minimizing distance between several variational posteriors. We noticed empirically that in the JMVAE, using the samples from $q(z|x,y)$ did much better (in accuracy) than samples from $q(z|x)$.

\begin{figure}[h!]
\centering
    \begin{subfigure}[b]{.49\linewidth}
        \includegraphics[width=\linewidth]{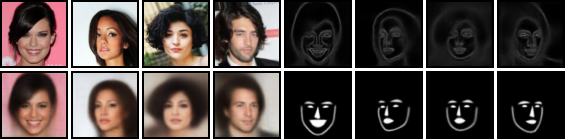}
        \caption{Edge Detection and Facial Landscapes}
    \end{subfigure}
    \begin{subfigure}[b]{.49\linewidth}
        \includegraphics[width=\linewidth]{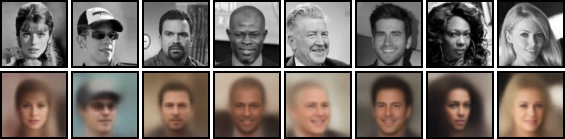}
        \caption{Colorization}
    \end{subfigure}
    \begin{subfigure}[b]{.49\linewidth}
        \includegraphics[width=\linewidth]{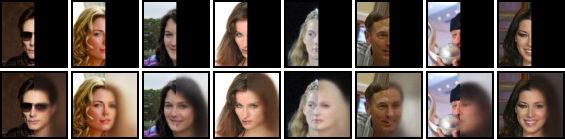}
        \caption{Fill in the Blank}
    \end{subfigure}
    \begin{subfigure}[b]{.49\linewidth}
        \includegraphics[width=\linewidth]{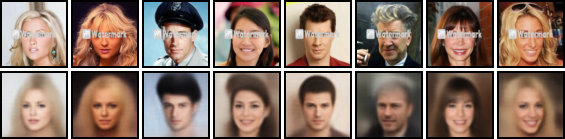}
        \caption{Removing Watermarks}
    \end{subfigure}
    \caption{\textit{Learning Computer Vision Transformations:} (a) 4 ground truth images randomly chosen from CelebA along with reconstructed images, edges, and facial landscape masks; (b) reconstructed color images; (c) image completion via reconstruction; (d) reconstructed images with the watermark removed. See supplement for a larger version with more samples.}
    \label{fig:vision:images}
\end{figure}

\section{Case study: Computer Vision Applications}
We use the MVAE to learn image transformations (and their inverses) as conditional distributions. In particular, we focus on colorization, edge detection, facial landmark segmentation, image completion, and watermark removal. The original image is itself a modality, for a total of six.

To build the dataset, we apply ground-truth transformations to CelebA. For \textit{colorization}, we transform RGB colors to grayscale. For \textit{image completion}, half of the image is replaced with black pixels. For \textit{watermark removal}, we overlay a generic watermark. To extract edges, we use the Canny detector \cite{canny1987computational} from Scikit-Image \cite{van2014scikit}. To compute facial landscape masks, we use dlib \cite{king2009dlib} and OpenCV \cite{bradski2000opencv}.

We fit a MVAE with 250 latent dimensions and $k {=} 1$. We use Adam with a $10^{-4}$ learning rate, a batch size of 50, $\lambda_{i} = 1$ for $i=1, ..., N$, $\beta$ annealing for 20 out of 100 epochs. Fig.~\ref{fig:vision:images} shows samples showcasing different learned transformations. In Fig.~\ref{fig:vision:images}a we encode the original image with the learned encoder, then decode the transformed image with the learned generative model. We see reasonable reconstruction, and good facial landscape and edge extraction. In Figs.\ref{fig:vision:images}b, \ref{fig:vision:images}c, \ref{fig:vision:images}d we go in the opposite direction, encoding a transformed image and then sampling from the generative model to reconstruct the original.
The results are again quite good: reconstructed half-images agree on gaze direction and hair color, colorizations are reasonable, and all trace of the watermark is removed.
(Though the reconstructed images still suffer from the same blurriness that VAEs do \cite{zhao2017towards}.)

\section{Case study: Machine Translation}
\begin{wraptable}{r}{6.3cm}
\centering
\begin{tabular}{l|l}
    \toprule
    Num. Aligned Data (\%) & Test log $p(x)$ \\
    \hline
    133 (0.1\%) & $-558.88 \pm 3.56$ \\
    665 (0.5\%) & $-494.76 \pm 4.18$\\
    1330 (1\%) & $-483.23 \pm 5.81$\\
    6650 (5\%) & $-478.75 \pm 3.00$\\
    13300 (10\%) & $-478.04 \pm 4.95$\\
    133000 (100\%) & $-478.12 \pm 3.02$\\
    \bottomrule
\end{tabular}
\caption{\textit{Weakly supervised translation.} Log likelihoods on a test set, averaged over 3 runs. Notably, we find good performance with a small fraction of paired examples.}
\label{table:translation_weaksup}
\end{wraptable}

As a second case study we explore machine translation with weak supervision -- that is, where only a small subset of data consist of translated sentence pairs.
Many of the popular translation models \citep{vaswani2017attention} are fully supervised with millions of parameters and trained on datasets with tens of millions of paired examples.
Yet aligning text across languages is very costly, requiring input from expert human translators.
Even the unsupervised machine translation literature relies on large bilingual dictionaries, strong pre-trained language models, or synthetic datasets \citep{lample2017unsupervised, artetxe2017unsupervised, ravi2011deciphering}. These factors make weak supervision particularly intriguing.

We use the English-Vietnamese dataset (113K sentence pairs) from IWSLT 2015 and treat English (en) and Vietnamese (vi) as two modalities.
We train the MVAE with 100 latent dimensions for 100 epochs ($\lambda_{\textup{en}} = \lambda_{\textup{vi}} = 1$). We use the RNN architectures from \citep{bowman2015generating} with a maximum sequence length of 70 tokens. As in \citep{bowman2015generating}, word dropout and KL annealing are crucial to prevent latent collapse.

\begin{table}[h]
\centering
\small
\begin{tabular}{ l|l }
  \toprule
  \textbf{Type} & \textbf{Sentence} \\
  $x_{\textup{en}} \sim p_{\textup{data}}$ & this was one of the highest points in my life. \\
  $x_{\textup{vi}} \sim p(x_{\textup{vi}}|z(x_{\textup{en}}))$ & \foreignlanguage{vietnamese}{Đó là một gian tôi vời của cuộc đời tôi.} \\
  $\textsc{Google}(x_{\textup{vi}})$ & It was a great time of my life. \\
  \hline
  $x_{\textup{en}} \sim p_{\textup{data}}$ & the project's also made a big difference in the lives of the people . \\
  $x_{\textup{vi}} \sim p(x_{\textup{vi}}|z(x_{\textup{en}}))$ & \foreignlanguage{vietnamese}{tôi án này được ra một Điều lớn lao cuộc sống của chúng người sống chữa hưởng .} \\
  $\textsc{Google}(x_{\textup{vi}})$ & this project is a great thing for the lives of people who live and thrive . \\
  \bottomrule
  $x_{\textup{vi}} \sim p_{\textup{data}}$ & \foreignlanguage{vietnamese}{trước tiên , tại sao chúng lại có ấn tượng xấu như vậy ?} \\
  $x_{\textup{en}} \sim p(x_{\textup{en}}|z(x_{\textup{vi}}))$ & first of all, you do not a good job ? \\
  $\textsc{Google}(x_{\textup{vi}})$ & First, why are they so bad? \\
  \hline
  $x_{\textup{vi}} \sim p_{\textup{data}}$ & \foreignlanguage{vietnamese}{Ông ngoại của tôi là một người thật đáng <unk> phục vào thời ấy .} \\
  $x_{\textup{en}} \sim p(x_{\textup{en}}|z(x_{\textup{vi}}))$ & grandfather is the best experience of me family . \\
  $\textsc{Google}(x_{\textup{vi}})$ & My grandfather was a worthy person at the time . \\
  \bottomrule
\end{tabular}
\caption{Examples of (1) translating English to Vietnamese by sampling from $p(x_{\textup{vi}}|z)$ where $z \sim q(z|x_{\textup{en}})$, and (2) the inverse. We use Google Translate (\textsc{Google}) for ground-truth.}
\label{table:translation_examples}
\end{table}

With only 1\% of aligned examples, the MVAE is able to describe test data almost as well as it could with a fully supervised dataset (Table \ref{table:translation_weaksup}). With 5\% aligned examples, the model reaches maximum performance. Table~\ref{table:translation_examples} shows examples of translation forwards and backwards between English and Vietnamese. See supplement for more examples. We find that many of the translations are not extremely faithful but interestingly capture a close interpretation to the true  meaning. While these results are not competitive to state-of-the-art translation, they are remarkable given the very weak supervision. Future work should investigate combining MVAE with modern translation architectures (e.g.~transformers, attention).

\section{Conclusion}
We introduced a multi-modal variational autoencoder with a new training paradigm that learns a joint distribution and is robust to missing data. By optimizing the ELBO with multi-modal and uni-modal examples, we fully utilize the product-of-experts structure to share inference network parameters in a fashion that scales to an arbitrary number of modalities. We find that the MVAE matches the state-of-the-art on four bi-modal datasets, and shows promise on two real world datasets.

\subsubsection*{Acknowledgments}
MW is supported by NSF GRFP and the Google Cloud Platform Education grant. NDG is supported under DARPA PPAML through the U.S. AFRL under Cooperative Agreement FA8750-14-2-0006. We thank Robert X. D. Hawkins and Ben Peloquin for helpful discussions.

\bibliographystyle{plain}
{\small
\linespread{1}
\bibliography{nips_2018}
}

\pagebreak
\onecolumn
\appendix

\section{Dataset Descriptions}

\paragraph{MNIST/BinaryMNIST}
We use the MNIST hand-written digits dataset \cite{lecun1998gradient} with 50,000 examples for training, 10,000 validation, 10,000 testing. We also train on a binarized version to align with previous work \cite{larochelle2011neural}.
As in \cite{suzuki2016joint}, we use the Adam optimizer \cite{kingma2014adam} with a learning rate of 1e-3, a minibatch size of 100, 64 latent dimensions, and train for 500 epochs. We anneal $\beta$ from $0$ to $1$ linearly for the first 200 epochs. For the encoders and decoders, we use MLPs with 2 hidden layers of 512 nodes. We model $p(x_{1}|z)$ with a Bernoulli likelihood and $p(x_{2}|z)$ with a multinomial likelihood.

\paragraph{FashionMNIST}
This is an MNIST-like fashion dataset containing 28 x 28 grayscale images of clothing from 10 classes---skirts, shoes, t-shirts, etc \cite{xiao2017fashion}. We use identical hyperparameters as in MNIST. However, we employ a miniature DCGAN \cite{radford2015unsupervised} for the image encoder and decoder.

\paragraph{MultiMNIST}
This is variant of MNIST where between 0 and 4 digits are composed together on a 50x50 canvas. Unlike \cite{eslami2016attend}, the digits are fixed in location. We generate the text modality by concatenating the digit classes from top-left to bottom-right.
We use 100 latent dimensions, with the remaining hyperparameters as in MNIST. For the image encoder and decoder, we retool the DCGAN architecture from \cite{radford2015unsupervised}. For the text encoder, we use a bidirectional GRU with 200 hidden units. For the text decoder, we first define a vocabulary with ten digits, a start token, and stop token. Provided a start token, we feed it through a 2-layer GRU,  linear layers, and a softmax. We sample a new character and repeat until generating a stop token. We note that previous work has not explored RNN-VAE inference networks in multi-modal learning, which we show to work well with the MVAE.

\paragraph{CelebA}
The CelebFaces and Attributes (CelebA) dataset \cite{yang2015facial} contains over 200k images of celebrities. Each image is tagged with 40 attributes i.e. wears glasses, or has bangs. We use the aligned and cropped version with a selected 18 visually distinctive attributes, as done in \cite{perarnau2016invertible}. Images are rescaled to 64x64.
For the first experiment, we treat images as one modality, $x_{1}$, and attributes as a second modality, $x_{2}$ where a single inference network predicts all 18 attributes. We also explore a variation of MVAE, called MVAE19, where we treat each attribute as its own modality for a total of 19. To approximate the full objective, we set $k = 1$ for a total 21 ELBO terms.
We use Adam with a learning rate of $10^{-4}$, a minibatch size of 100, and anneal KL for the first 20 of 100 epochs. We again use DCGAN for image networks. For the attribute encoder and decoder, we use an MLP with 2 hidden layers of size 512. For MVAE19, we have 18 such encoders and decoders.

\section{Product of a Finite Number of Gaussians}
In this section, we provide the derivation for the parameters of a product of Gaussian experts (PoE). Derivation is summarized from \cite{bromiley2003products}.

\begin{lem}
Give a finite number $N$ of multi-dimensional Gaussian distributions $p_{i}(x)$ with mean $\mu_{i}$, covariance $V_{i}$ for $i = 1, ..., N$, the product $\prod_{i=1}^{N} p_{i}(x)$ is itself Gaussian with mean $(\sum_{i=1}^{N} T_{i}\mu_{i})(\sum_{i=1}^{N} T_{i})^{-1}$ and covariance $(\sum_{i=1}^{N} T_{i})^{-1}$ where $T_i = V_{i}^{-1}$.
\end{lem}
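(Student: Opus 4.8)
The plan is to work directly with the Gaussian density functions and complete the square in the exponent. First I would write, with $d$ the common dimension,
$\prod_{i=1}^{N} p_i(x) = \Big(\prod_{i=1}^{N} (2\pi)^{-d/2}|V_i|^{-1/2}\Big)\exp\!\Big(-\tfrac12\sum_{i=1}^{N}(x-\mu_i)^\top T_i(x-\mu_i)\Big)$,
so that every occurrence of $x$ is confined to a single exponential of a sum of quadratic forms; the prefactor is a positive scalar independent of $x$ and can be carried along untouched.

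The heart of the argument is the completion-of-the-square identity
$\sum_{i=1}^{N}(x-\mu_i)^\top T_i (x-\mu_i) = (x-\mu_*)^\top T_*(x-\mu_*) + c$,
where $T_* \triangleq \sum_{i=1}^{N} T_i$ and $c$ does not depend on $x$. I would establish it by expanding the left side, using symmetry of each $T_i$, into $x^\top T_* x - 2x^\top\big(\sum_i T_i\mu_i\big) + \sum_i \mu_i^\top T_i\mu_i$, expanding the right side similarly, and matching coefficients of the quadratic and linear terms in $x$. Matching the quadratic term gives the precision $T_*=\sum_i T_i$; matching the linear term forces $T_*\mu_* = \sum_i T_i\mu_i$, i.e. $\mu_* = \big(\sum_i T_i\big)^{-1}\sum_i T_i\mu_i$, which is the asserted mean; the residue is the scalar $c = \sum_i\mu_i^\top T_i\mu_i - \mu_*^\top T_*\mu_*$. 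Here I use that each $T_i$ is symmetric positive definite, hence so is $T_*$, so that $T_*^{-1}$ exists.

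Substituting the identity back, the product becomes $(\textup{const})\cdot\exp\!\big(-\tfrac12(x-\mu_*)^\top T_*(x-\mu_*)\big)$, which is proportional to the density of a Gaussian with mean $\mu_*$ and covariance $T_*^{-1} = \big(\sum_i T_i\big)^{-1}$. To finish I would invoke uniqueness of the Gaussian normalizing constant: the only scalar that makes the last expression integrate to one is $(2\pi)^{-d/2}|T_*|^{1/2}$, so after renormalization the product is exactly that Gaussian. An equally clean alternative is induction on $N$, reducing the general case to the product of two Gaussians (the $N=2$ instance of the same identity) and using that $\sum_{i=1}^{N-1} T_i$ is again positive definite so the inductive hypothesis applies; this is essentially the route of \cite{bromiley2003products}.

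I do not anticipate a genuine obstacle here, since the whole argument is elementary linear algebra. The one point demanding care is the bookkeeping of the non-$x$ terms — the product of the individual normalizers together with $e^{-c/2}$ — so as to be certain that nothing $x$-dependent has been absorbed into the ``constant''; and, if one reads the statement literally, to observe that a product of density functions is in general an \emph{unnormalized} Gaussian, so that ``is itself Gaussian'' is to be understood as ``is proportional to a Gaussian density'' with the stated mean and covariance.
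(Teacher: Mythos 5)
Your proof is correct and is essentially the paper's argument: the paper writes each Gaussian in canonical form $\exp\{\eta_i^{T}x-\tfrac12 x^{T}\Lambda_i x\}$ and adds natural parameters, which is exactly the expansion-and-coefficient-matching you perform via completing the square (with $\eta_i=T_i\mu_i$, $\Lambda_i=T_i$). Your added remark that the product is only \emph{proportional} to the stated Gaussian is a fair reading of the paper's use of $\propto$, but it does not change the route of the proof.
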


\begin{proof} We write the probability density of a Gaussian distribution in canonical form as $K\exp\{\eta^{T}x - \frac{1}{2}x^{T}\Lambda x\}$ where $K$ is a normalizing constant, $\Lambda = V^{-1}$, $\eta = V^{-1}\mu$. We then write the product of $N$ Gaussians distributions $\prod_{i=1}^{N} p_i \propto \exp\{(\sum_{i=1}^{N} \eta_i)^{T}x - \frac{1}{2}x^{T}(\sum_{i=1}^{N} \Lambda_{i})x\}$. We note that this product itself has the form of a Gaussian distribution with $\eta = \sum_{i=1}^{N} \eta_i$ and $\Lambda = \sum_{i=1}^{N}\Lambda_i$. Converting back from canonical form, we see that the product Gaussian has mean $\mu = (\sum_{i=1}^{N} T_{i}\mu_{i})(\sum_{i=1}^{N} T_{i})^{-1}$ and covariance $V = (\sum_{i=1}^{N} T_{i})^{-1}$.
\end{proof}

\section{Quotient of Two Gaussians}
Similarly, we may derive the form of a quotient of two Gaussian distributions (QoE).

\begin{lem}
Give two multi-dimensional Gaussian distributions $p(x)$ and $q(x)$ with mean $\mu_{p}$ and $\mu_{q}$, and covariance $V_{p}$ and $V_{q}$ respectively, the quotient $\frac{p(x)}{q(x)}$ is itself Gaussian with mean $(T_{p}\mu_{p} - T_{q}\mu_{q})(T_{p} - T_{q})^{-1}$ and covariance $(T_{p} - T_{q})^{-1}$ where $T_i(x) = V_{i}^{-1}(x)$.
\end{lem}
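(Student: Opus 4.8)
The plan is to mirror the canonical-form argument used in the previous lemma for the product. First I would write each density in natural-parameter (canonical) form: $p(x) = K_p \exp\{\eta_p^{T} x - \tfrac{1}{2} x^{T} \Lambda_p x\}$ and $q(x) = K_q \exp\{\eta_q^{T} x - \tfrac{1}{2} x^{T} \Lambda_q x\}$, where $\Lambda_p = V_p^{-1} = T_p$, $\eta_p = V_p^{-1}\mu_p = T_p\mu_p$, and analogously for $q$. The reason for using this parametrization is precisely that multiplying or dividing densities acts additively on the exponents, so the computation reduces to bookkeeping.

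Next I would take the ratio: $\frac{p(x)}{q(x)} = \frac{K_p}{K_q}\exp\{(\eta_p - \eta_q)^{T} x - \tfrac{1}{2} x^{T}(\Lambda_p - \Lambda_q) x\}$. Up to the constant prefactor this is exactly the canonical form of a Gaussian with natural parameter $\eta = \eta_p - \eta_q$ and precision $\Lambda = \Lambda_p - \Lambda_q = T_p - T_q$. Converting back to the mean/covariance parametrization then gives covariance $V = (\Lambda_p - \Lambda_q)^{-1} = (T_p - T_q)^{-1}$ and mean $\mu = V\eta = (T_p - T_q)^{-1}(T_p\mu_p - T_q\mu_q)$, which matches the claimed form (the statement's $(T_p\mu_p - T_q\mu_q)(T_p - T_q)^{-1}$ being the same quantity in the paper's row-vector notation). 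The leftover scalar is fixed by renormalizing so the density integrates to one; hence the relation is an equality only up to a normalizing constant, exactly as in the product lemma.

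The one real subtlety, and the step I expect to be the main obstacle, is well-definedness rather than algebra: the expression above is a genuine probability density only when the quadratic form $x^{T}(T_p - T_q)x$ is positive definite, i.e. when $T_p - T_q = V_p^{-1} - V_q^{-1} \succ 0$, equivalently $V_q \succ V_p$ (the element-wise/PSD condition already flagged in the body text). If this fails, the ``Gaussian'' has a non-normalizable kernel and the quotient is not a distribution at all, so the clean statement really carries this implicit hypothesis. I would therefore state that assumption explicitly at the outset of the proof; given it, the remainder is the routine canonical-form manipulation above, and no completing-the-square in the exponent is needed.
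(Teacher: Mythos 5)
Your proposal is correct and follows essentially the same route as the paper's proof: write both densities in canonical (natural-parameter) form, subtract the exponents, and convert $\Lambda = T_p - T_q$, $\eta = T_p\mu_p - T_q\mu_q$ back to mean and covariance. Your explicit flagging of the positive-definiteness condition $V_q \succ V_p$ is a welcome addition that the paper defers to the remark following the lemma.
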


\begin{proof} We again write the probability density of a Gaussian distribution as $K \exp\{\eta^{T}x - \frac{1}{2}x^{T}\Lambda x\}$. We then write the quotient of two Gaussians $p$ and $q$ as $\frac{K_p\exp\{\eta^{T}_{p}x - \frac{1}{2}x^{T}\Lambda_{p}x\}}{K_q\exp\{\eta^{T}_{q}x - \frac{1}{2}x^{T}\Lambda_{q}x\}} \propto \exp\{(\eta_{p} - \eta_{q})^{T}x - \frac{1}{2}x^{T}(\Lambda_{p} - \Lambda_q)x\}$. This defines a new Gaussian distribution with $\Lambda = V_{p}^{-1} - V_{q}^{-1}$ and $\eta = V_{p}^{-1}\mu_{p} - V_{q}^{-1}\mu_{q}$. If we let $T_p = V_{p}^{-1}$ and $T_q = V_{q}^{-1}$, then we see that $V = \Lambda^{-1} = (T_p - T_q)^{-1}$ and $\mu = \eta V^{-1} = (T_p\mu_p - T_q\mu_q)(T_p - T_q)^{-1}$.
\end{proof}

The QoE suggests that the constraint $T_p > T_q \Rightarrow V_{p}^{-1} > V_{q}^{-1}$ must hold for the resulting Gaussian to be well-defined. In our experiments, $p$ is usually a product of Gaussians, and $q$ is a product of prior Gaussians (see Eqn 3 in main paper). Given $N$ modalities, we can decompose $V_p = \sum_{i=1}^{N} V_{i}^{-1}$ and $V_q = \sum_{i=1}^{N - 1} 1 = N-1$ where the prior is a unit Gaussian with variance 1. Thus, the constraint can be rewritten as $\sum_{i=1}^{N} V_{i}^{-1} > N - 1$, which is satisfied if $V_{i} > \frac{N}{N-1}, i=1, ..., N$. One benefit of using the regularized importance distribution $q(z|x)p(z)$ is to remove the need for this constraint. To fit MVAE without a universal expert, we add an additional nonlinearity to each inference network such that the variance is fed into a rescaled sigmoid: $V = \frac{N}{N-1} \cdot \text{sigm}(V)$.

\section{Additional Results using the Joint Inference Network}

In the main paper, we reported marginal probabilities using $q(z|x_1)$ and showed that MVAE is state-of-the-art. Here we similarly compute marginal probabilities but using $q(z|x_1, x_2)$. Because importance sampling with either induced distribution yields an unbiased estimate, using a large number of samples should result in very similar log-likelihoods. Indeed, we find that the results do not differ much from the main paper: MVAE is still at the state-at-the-art.

\begin{table}[h!]
\centering
\small
\begin{tabular}{ l|c|c|c|c|c }
    \toprule
    Model & BinaryMNIST & MNIST & FashionMNIST & MultiMNIST & CelebA \\
    \hline
    \multicolumn{6}{c}{Estimating $\textup{log }p(x_{1})$ using $q(z|x_{1},x_{2})$} \\
    \hline
    JMVAE & -86.234 & -90.962 & \textbf{-232.401} & -153.026 & \textbf{-6234.542} \\
    MVAE & \textbf{-86.051} & \textbf{-90.616} & -232.539 & \textbf{-152.826} & -6237.104 \\
    MVAE19 & -- & -- & -- & -- & -6236.113 \\
    \hline
    \multicolumn{6}{c}{Estimating $\textup{log }p(x_{1},x_{2})$ using $q(z|x_{1},x_{2})$} \\
    \hline
    JMVAE & -86.304 & -91.031 & \textbf{-232.700} & \textbf{-153.320} & \textbf{-6238.280} \\
    MVAE & \textbf{-86.278} & \textbf{-90.851} & -233.007 & -153.478 & -6241.621 \\
    MVAE19 & -- & -- & -- & -- & -6239.957 \\
    \hline
    \multicolumn{6}{c}{Estimating $\textup{log }p(x_{1}|x_{2})$ using $q(z|x_{1},x_{2})$} \\
    \hline
    JMVAE & \textbf{-83.820} & \textbf{-88.436} & \textbf{-230.651} & \textbf{-145.761} & -6235.330 \\
    MVAE & -83.940 & -88.558 & -230.699 & -147.009 & -6235.368 \\
    MVAE19 & -- & -- & -- & -- & \textbf{-6233.330} \\
    \bottomrule
\end{tabular}
\caption{Similar estimates as in Table 2 (in main paper) but using $q(z|x_{1},x_{2})$ as an importance distribution (instead of $q(z|x_{1})$). Because VAE and CVAE do not have a multi-modal inference network, they are excluded. Again, we show that the MVAE is able to match state-of-the-art.}
\label{table:xy_results}
\end{table}

\begin{table}[h!]
\centering
\small
\begin{tabular}{ l|c|c|c|c|c }
    \toprule
    \multicolumn{6}{c}{Variance of Marginal Log Importance Weights: $\textup{var}(\textup{log}(\frac{p(x_1,z)}{q(z|x_1,x_2)}))$} \\
    \hline
    Model & BinaryMNIST & MNIST & FashionMNIST & MultiMNIST & CelebA \\
    \hline
    JMVAE & 22.387 & \textbf{24.962} & 28.443 & 35.822 & 80.808 \\
    MVAE & \textbf{21.791} & 25.741 & \textbf{18.092} & \textbf{16.437} & 73.871 \\
    MVAE19 & -- & -- & -- & -- & \textbf{71.546} \\
    \hline
    \multicolumn{6}{c}{Variance of Joint Log Importance Weights: $\textup{var}(\textup{log}(\frac{p(x_1,x_2,z)}{q(z|x_1,x_2)}))$} \\
    \hline
    JMVAE & 23.309 & 26.767 & 29.874 & 38.298 & 81.312 \\
    MVAE & \textbf{21.917} & \textbf{26.057} & \textbf{18.263} & \textbf{16.672} & 74.968 \\
    MVAE19 & -- & -- & -- & -- & \textbf{71.953} \\
    \hline
    \multicolumn{6}{c}{Variance of Conditional Log Importance Weights: $\textup{var}(\textup{log}(\frac{p(x_1,z|x_2)}{q(z|x_1,x_2)}))$} \\
    \hline
    JMVAE & 40.646 & 40.086 & 56.452 & 92.683 & 335.046 \\
    MVAE & \textbf{23.035} & \textbf{27.652} & \textbf{19.934} & \textbf{28.649} & 77.516 \\
    MVAE19 & -- & -- & -- & -- & \textbf{71.603} \\
    \bottomrule
\end{tabular}
\caption{Average variance of log importance weights for marginal, joint, and conditional distributions using $q(z|x_1,x_2)$. Lower variances suggest better inference networks.}
\label{table:xy_variances}
\end{table}


\section{Model Architectures}
Here we specify the design of inference networks and decoders used for each dataset.

\begin{figure}[!htb]
\centering
    \begin{subfigure}[b]{.22\linewidth}
        \centering
        \includegraphics[width=\linewidth]{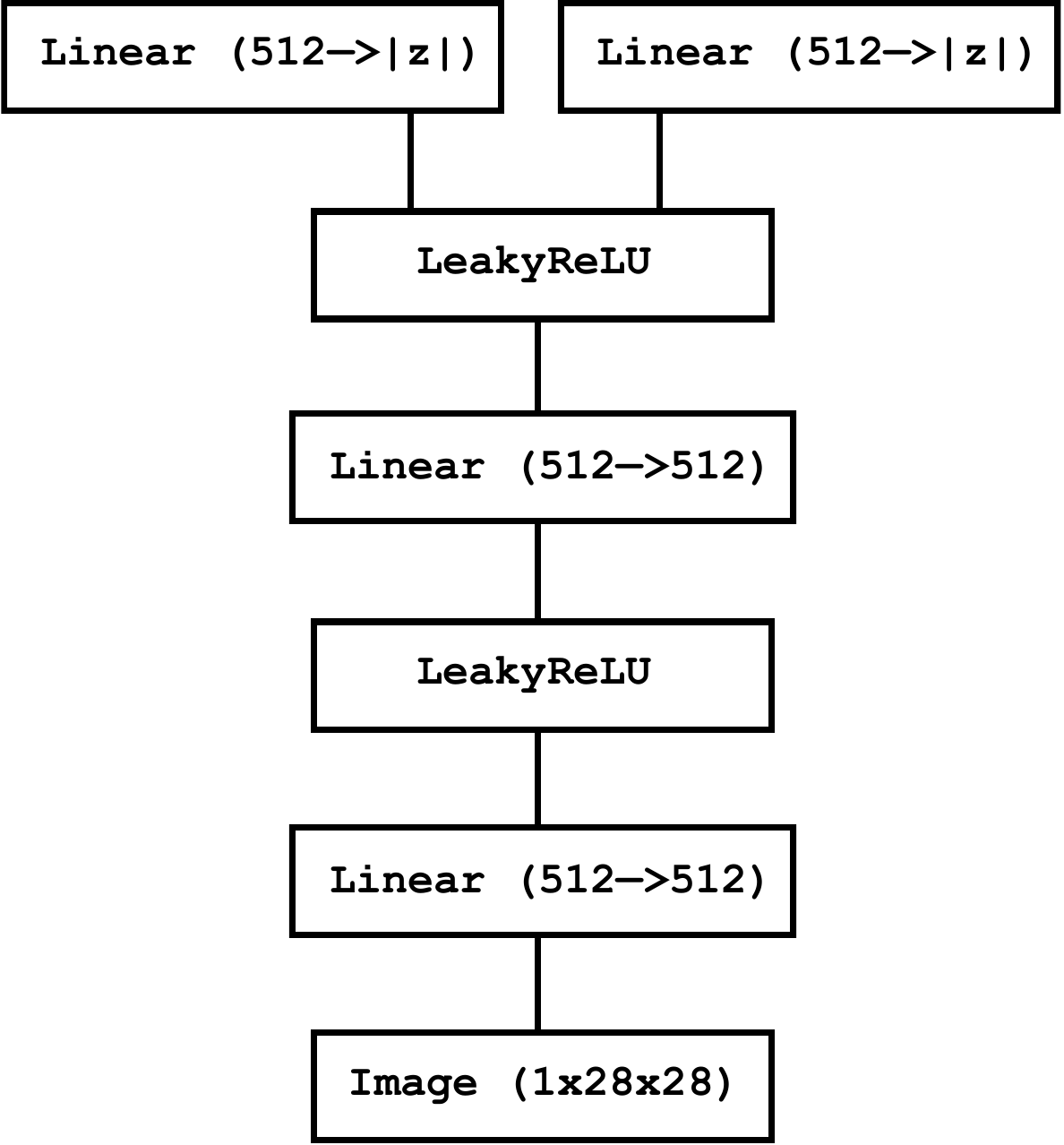}
        \caption{}
        \label{fig:arch:mnist:imageencoder}
    \end{subfigure}\hspace{5mm}
    \begin{subfigure}[b]{.1\linewidth}
        \includegraphics[width=\linewidth]{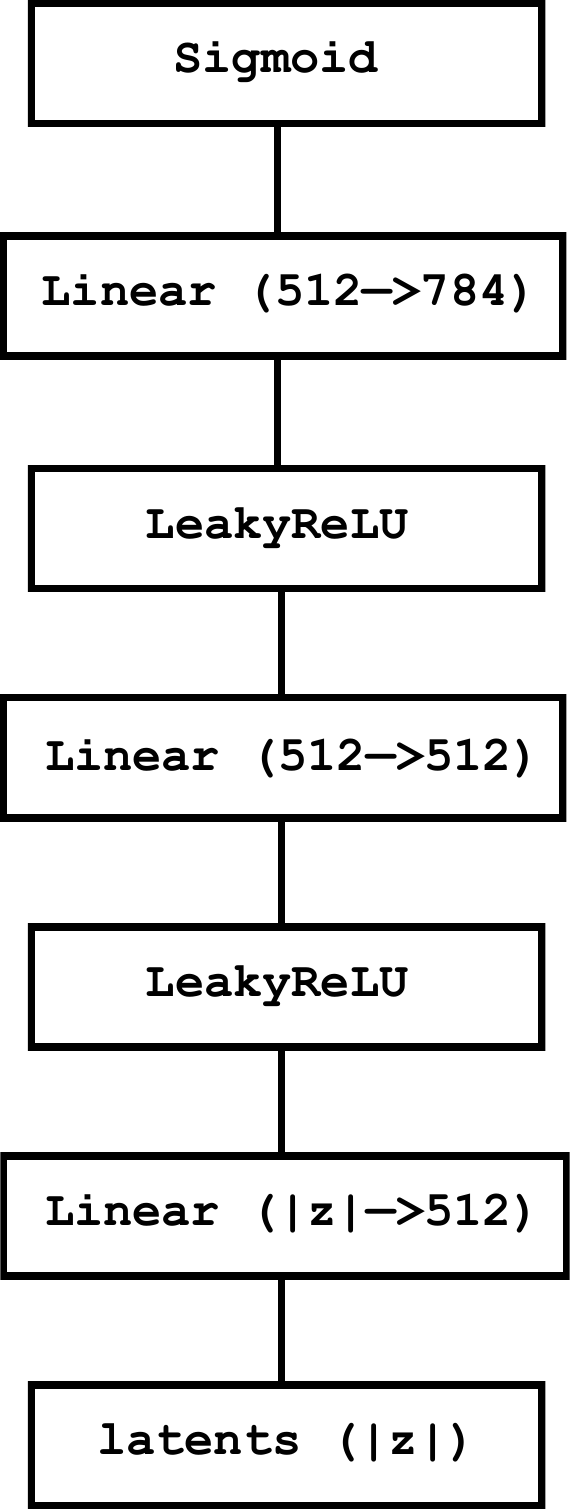}
        \caption{}
        \label{fig:arch:mnist:imagedecoder}
    \end{subfigure}\hspace{5mm}
    \begin{subfigure}[b]{.2\linewidth}
        \centering
        \includegraphics[width=\linewidth]{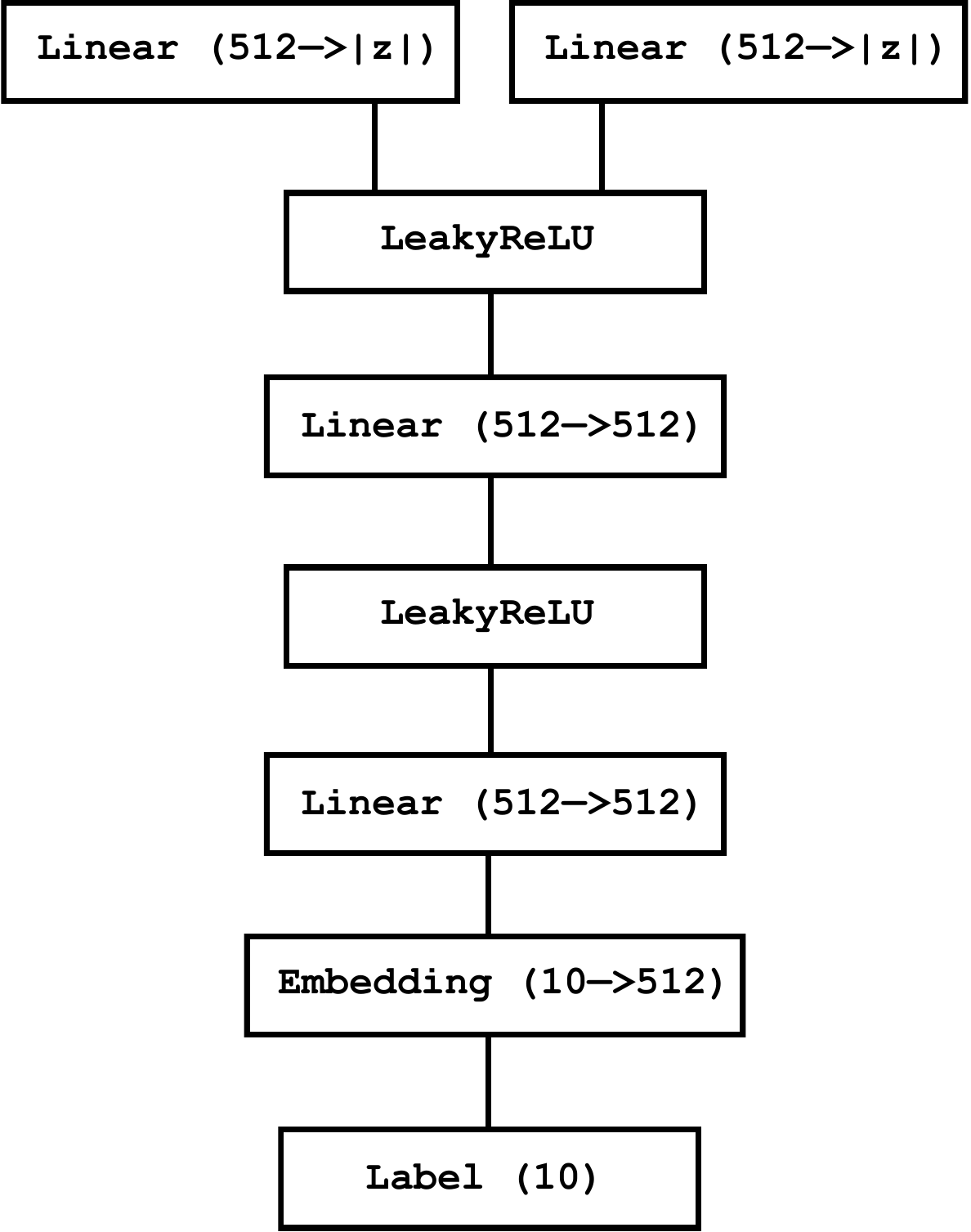}
        \caption{}
        \label{fig:arch:mnist:textencoder}
    \end{subfigure}\hspace{5mm}
    \begin{subfigure}[b]{.1\linewidth}
        \includegraphics[width=\linewidth]{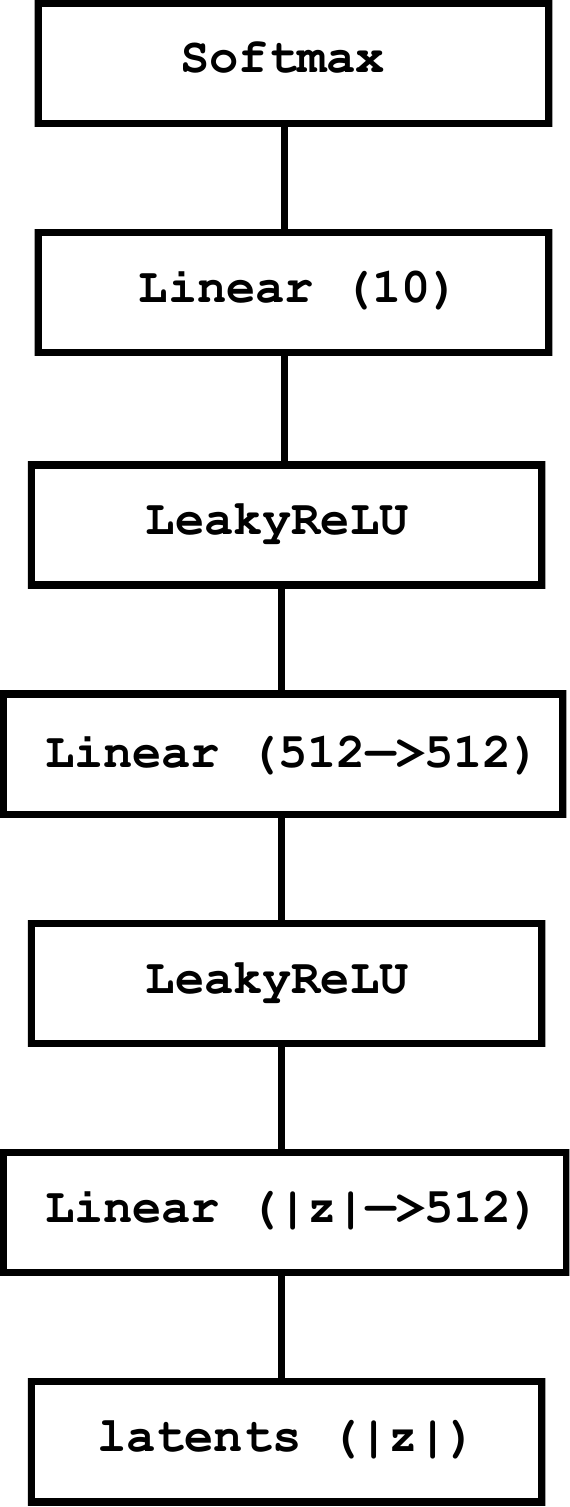}
        \caption{}
        \label{fig:arch:mnist:textdecoder}
    \end{subfigure}\hspace{5mm}
    \caption{\textit{MVAE architectures on MNIST:} (a) $q(z|x_{1})$, (b) $p(x_{1}|z)$, (c) $q(z|x_{2})$, (d) $q(x_{2}|z)$ where $x_{1}$ specifies an image and $x_{2}$ specifies a digit label.}
    \label{fig:arch:mnist}
\end{figure}

\begin{figure}[!htb]
\centering
    \begin{subfigure}[b]{.22\linewidth}
        \centering
        \includegraphics[width=\linewidth]{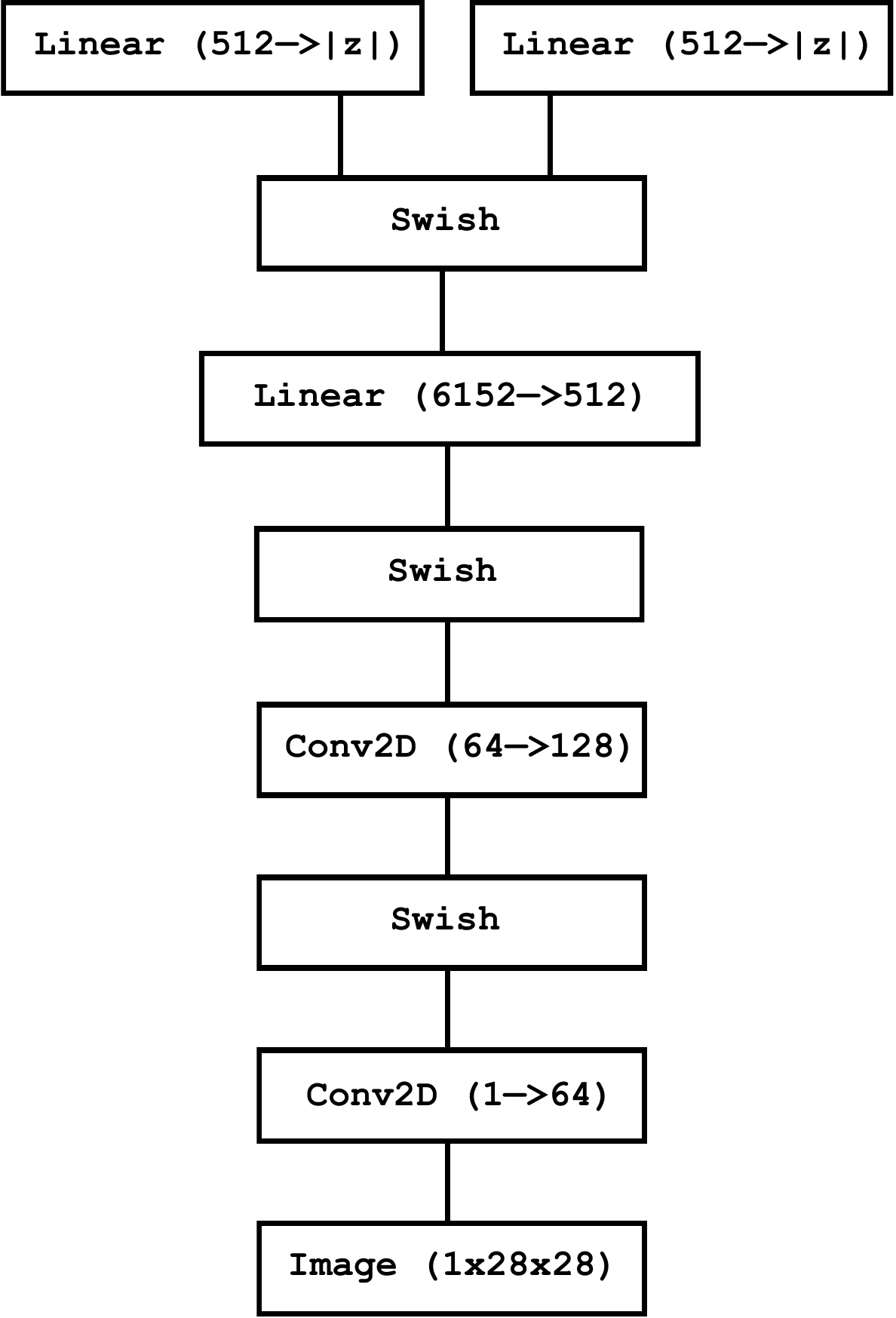}
        \caption{}
        \label{fig:arch:fashionmnist:imageencoder}
    \end{subfigure}\hspace{5mm}
    \begin{subfigure}[b]{.15\linewidth}
        \includegraphics[width=\linewidth]{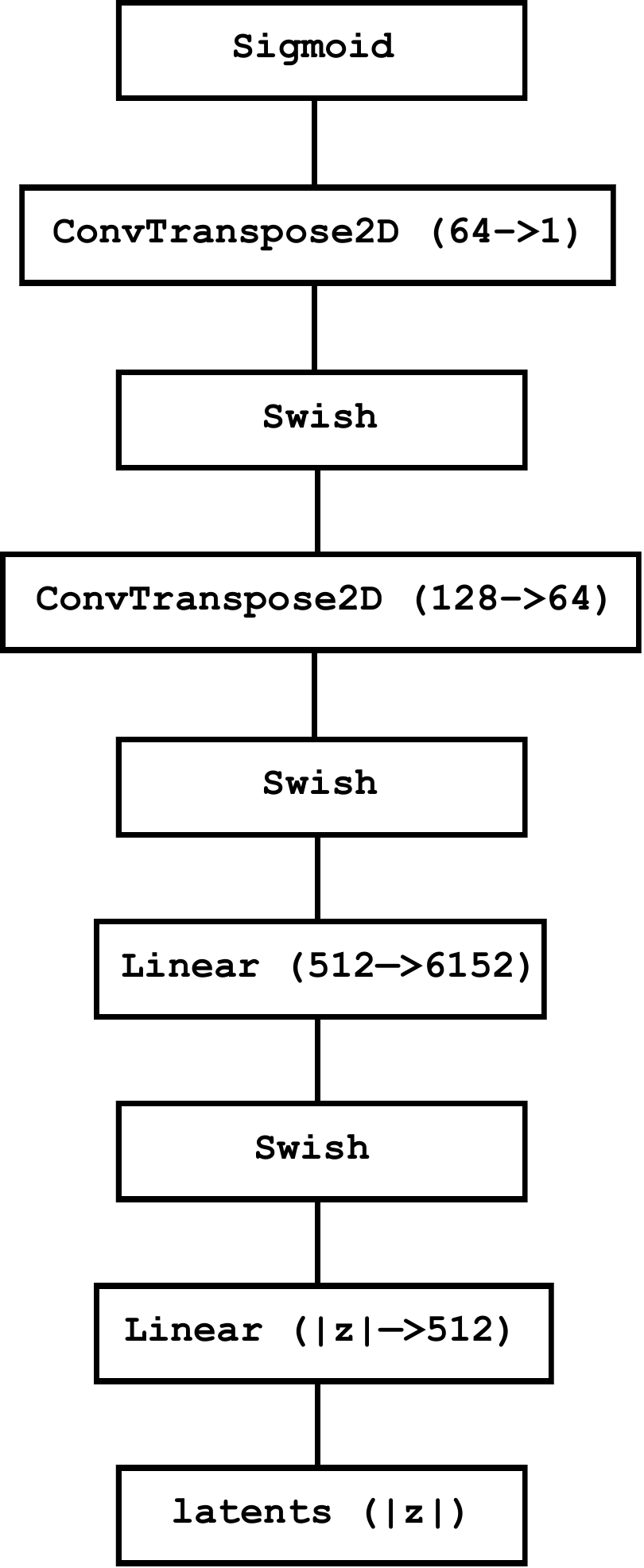}
        \caption{}
        \label{fig:arch:fashionmnist:imagedecoder}
    \end{subfigure}\hspace{5mm}
    \begin{subfigure}[b]{.25\linewidth}
        \centering
        \includegraphics[width=\linewidth]{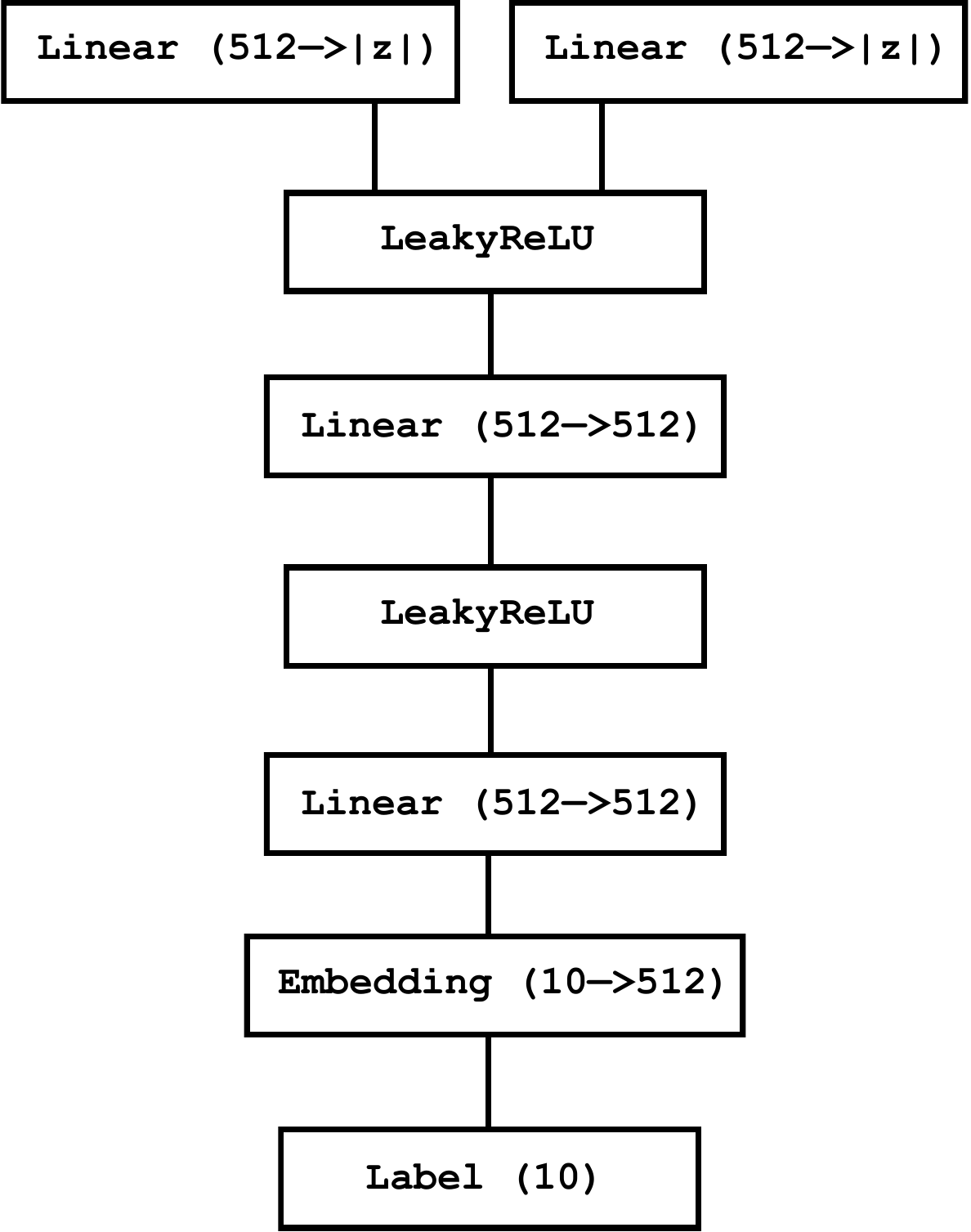}
        \caption{}
        \label{fig:arch:fashionmnist:textencoder}
    \end{subfigure}\hspace{5mm}
    \begin{subfigure}[b]{.12\linewidth}
        \includegraphics[width=\linewidth]{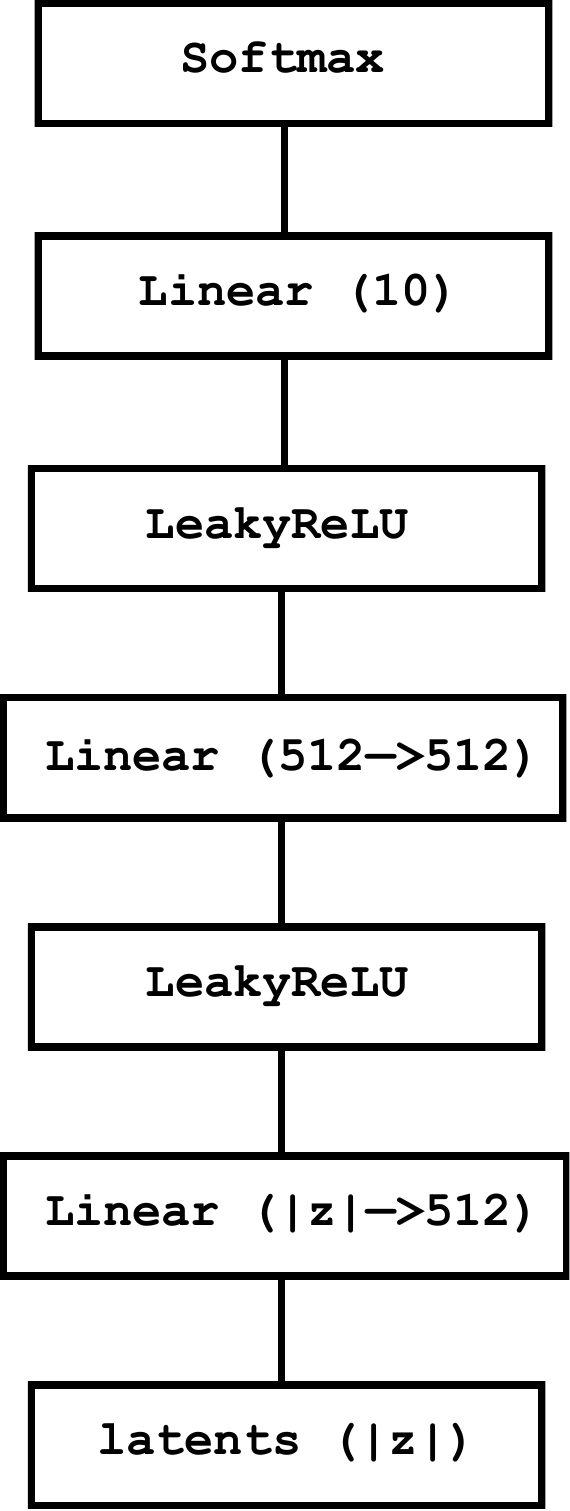}
        \caption{}
        \label{fig:arch:fashionmnist:textdecoder}
    \end{subfigure}\hspace{5mm}
    \caption{\textit{MVAE architectures on FashionMNIST:} (a) $q(z|x_{1})$, (b) $p(x_{1}|z)$, (c) $q(z|x_{2})$, (d) $q(x_{2}|z)$ where $x_{1}$ specifies an image and $x_{2}$ specifies a clothing label.}
    \label{fig:arch:fashionmnist}
\end{figure}

\begin{figure}[!htb]
\centering
    \begin{subfigure}[b]{.14\linewidth}
        \centering
        \includegraphics[width=\linewidth]{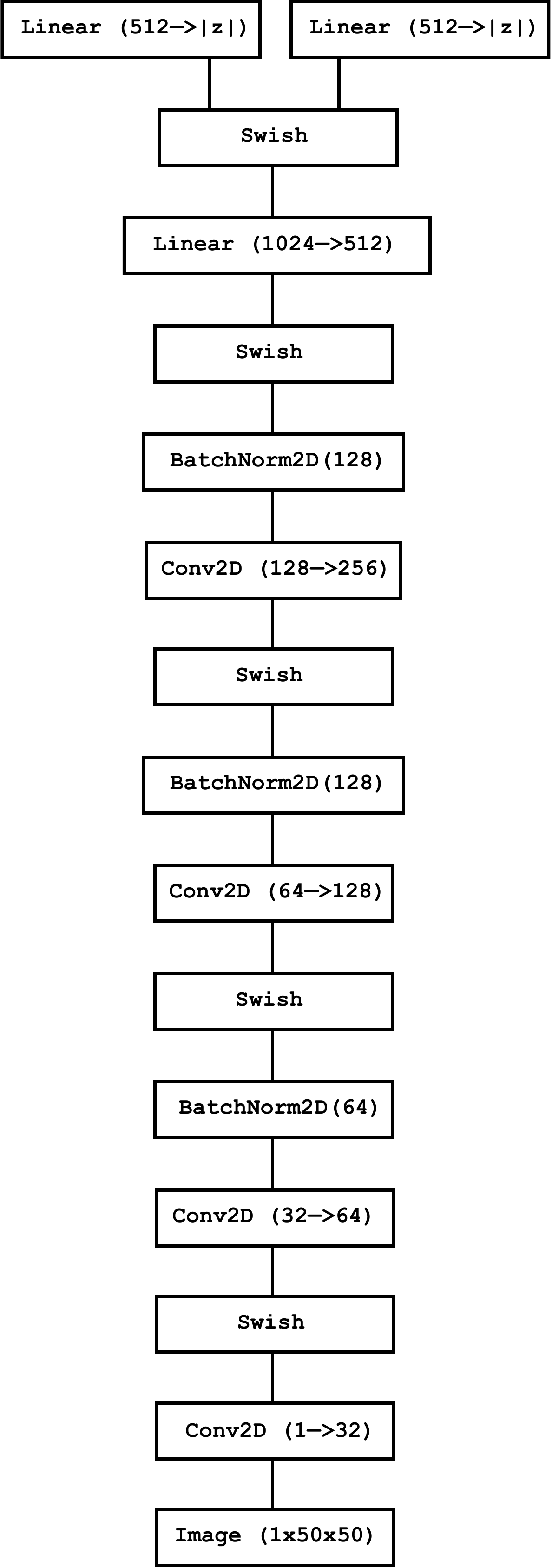}
        \caption{}
        \label{fig:arch:multimnist:imageencoder}
    \end{subfigure}\hspace{5mm}
    \begin{subfigure}[b]{.11\linewidth}
        \includegraphics[width=\linewidth]{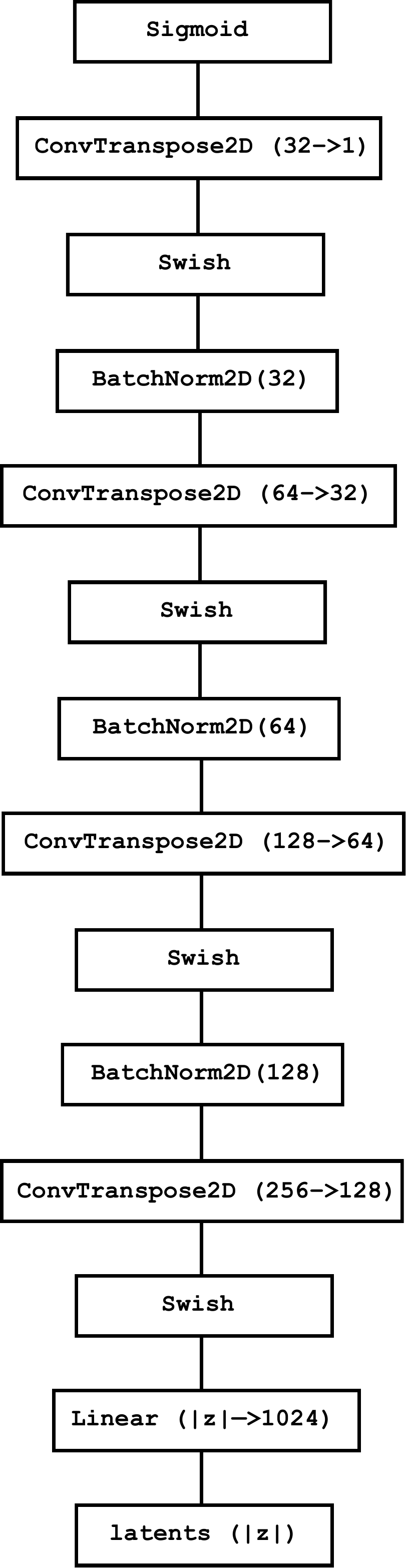}
        \caption{}
        \label{fig:arch:multimnist:imagedecoder}
    \end{subfigure}\hspace{5mm}
    \begin{subfigure}[b]{.22\linewidth}
        \centering
        \includegraphics[width=\linewidth]{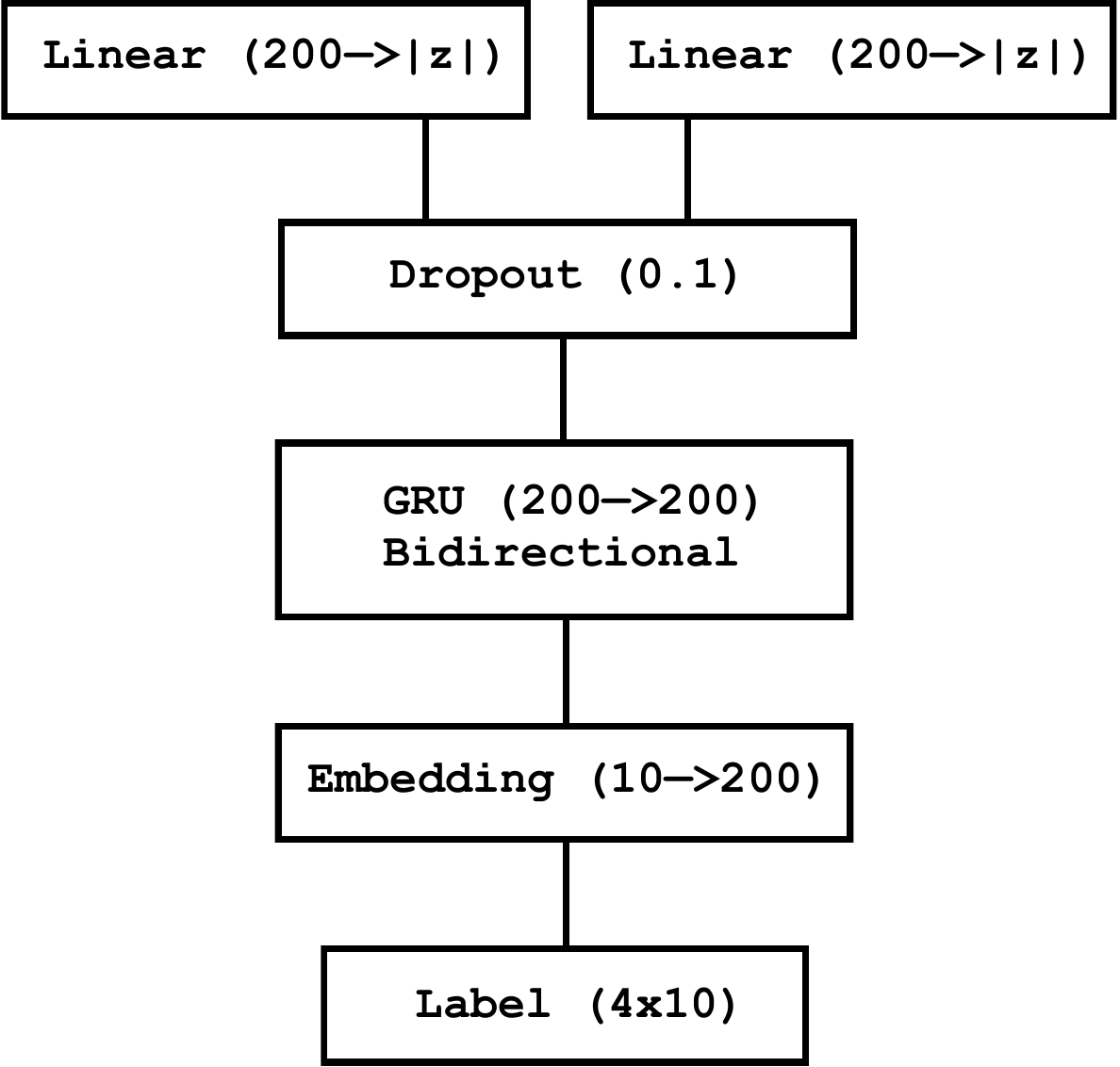}
        \caption{}
        \label{fig:arch:multimnist:textencoder}
    \end{subfigure}\hspace{5mm}
    \begin{subfigure}[b]{.26\linewidth}
        \includegraphics[width=\linewidth]{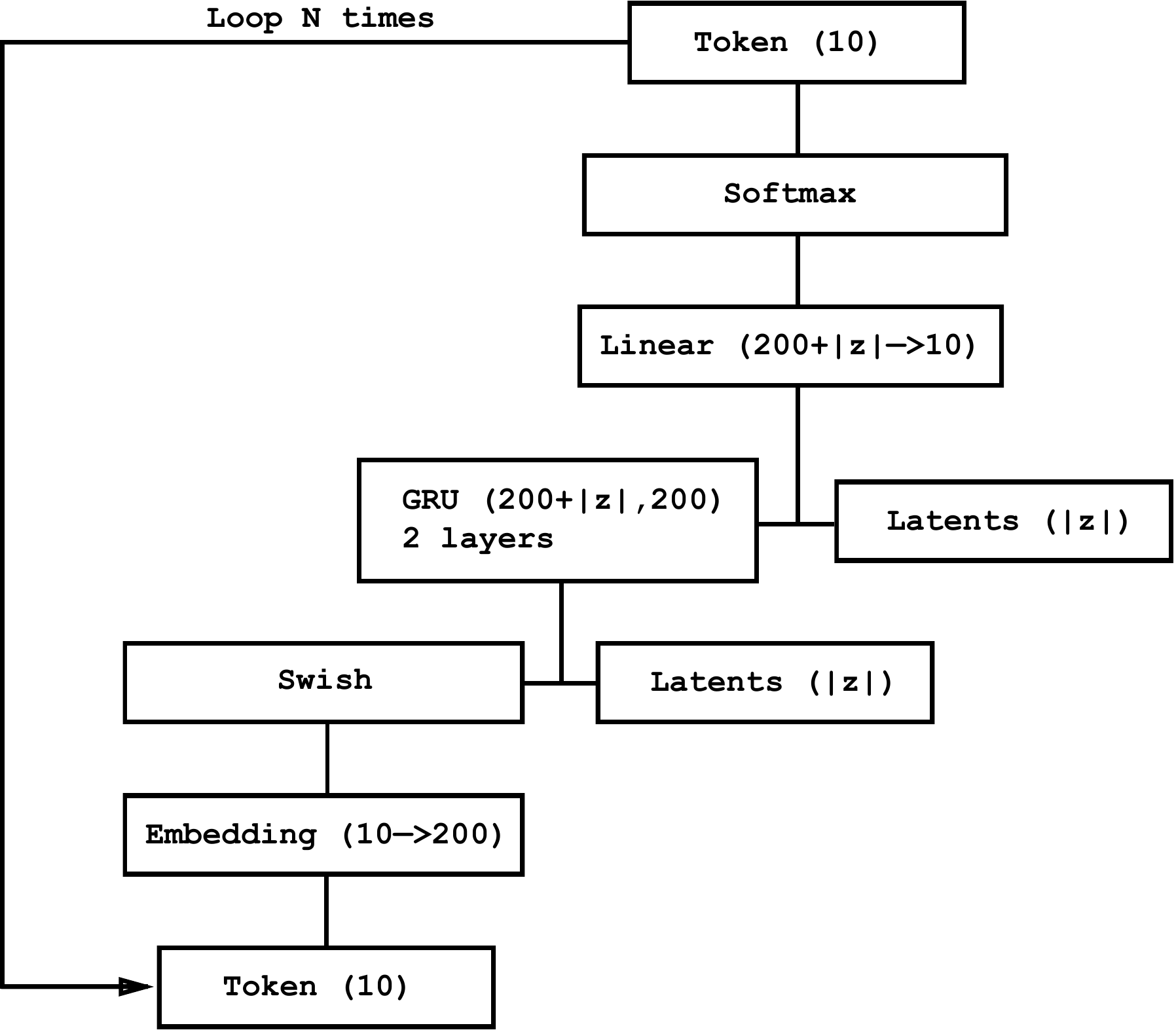}
        \caption{}
        \label{fig:arch:multimnist:textdecoder}
    \end{subfigure}\hspace{5mm}
    \caption{\textit{MVAE architectures on MultiMNIST:} (a) $q(z|x_{1})$, (b) $p(x_{1}|z)$, (c) $q(z|x_{2})$, (d) $q(x_{2}|z)$ where $x_{1}$ specifies an image and $x_{2}$ specifies a string of 4 digits.}
    \label{fig:arch:multimnist}
\end{figure}

\begin{figure}[!htb]
\centering
    \begin{subfigure}[b]{.14\linewidth}
        \centering
        \includegraphics[width=\linewidth]{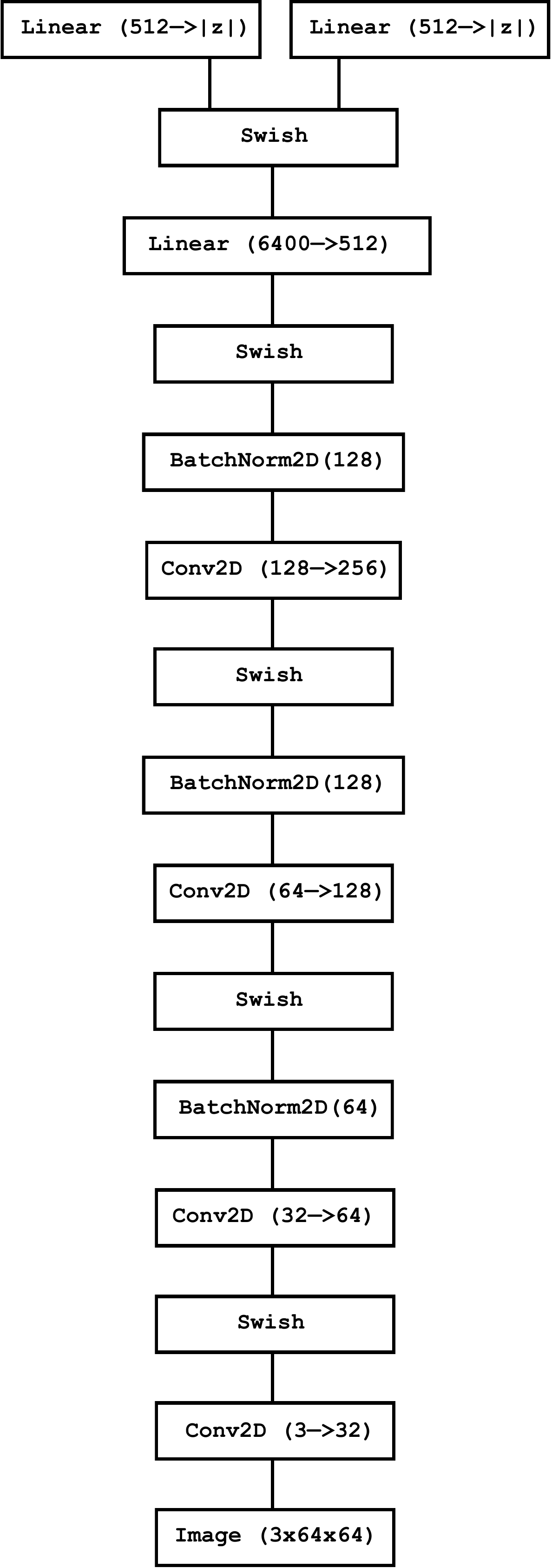}
        \caption{}
        \label{fig:arch:celeba:imageencoder}
    \end{subfigure}\hspace{5mm}
    \begin{subfigure}[b]{.11\linewidth}
        \includegraphics[width=\linewidth]{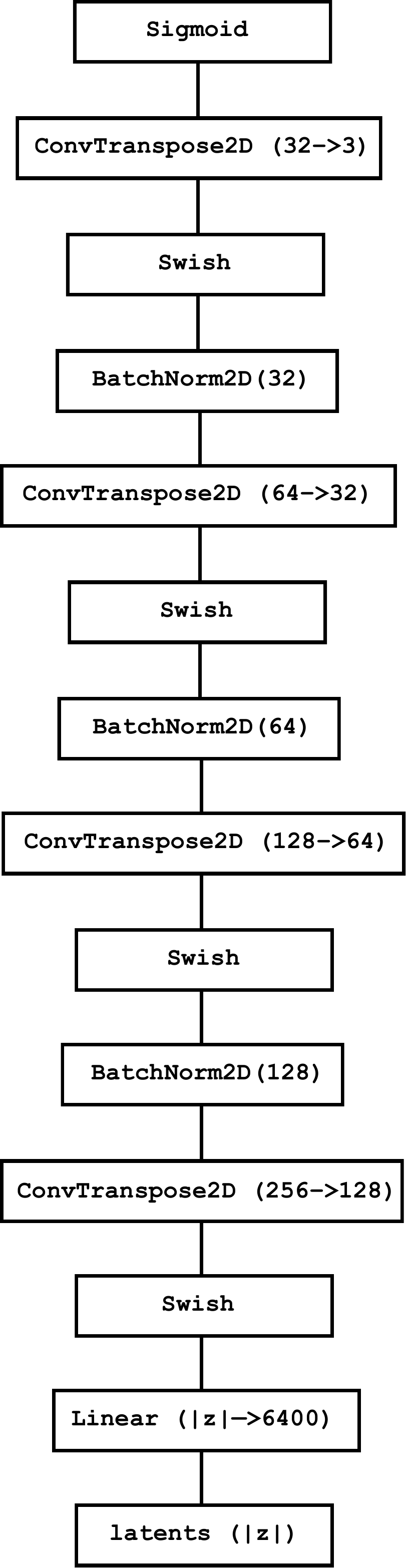}
        \caption{}
        \label{fig:arch:celeba:imagedecoder}
    \end{subfigure}\hspace{5mm}
    \begin{subfigure}[b]{.22\linewidth}
        \centering
        \includegraphics[width=\linewidth]{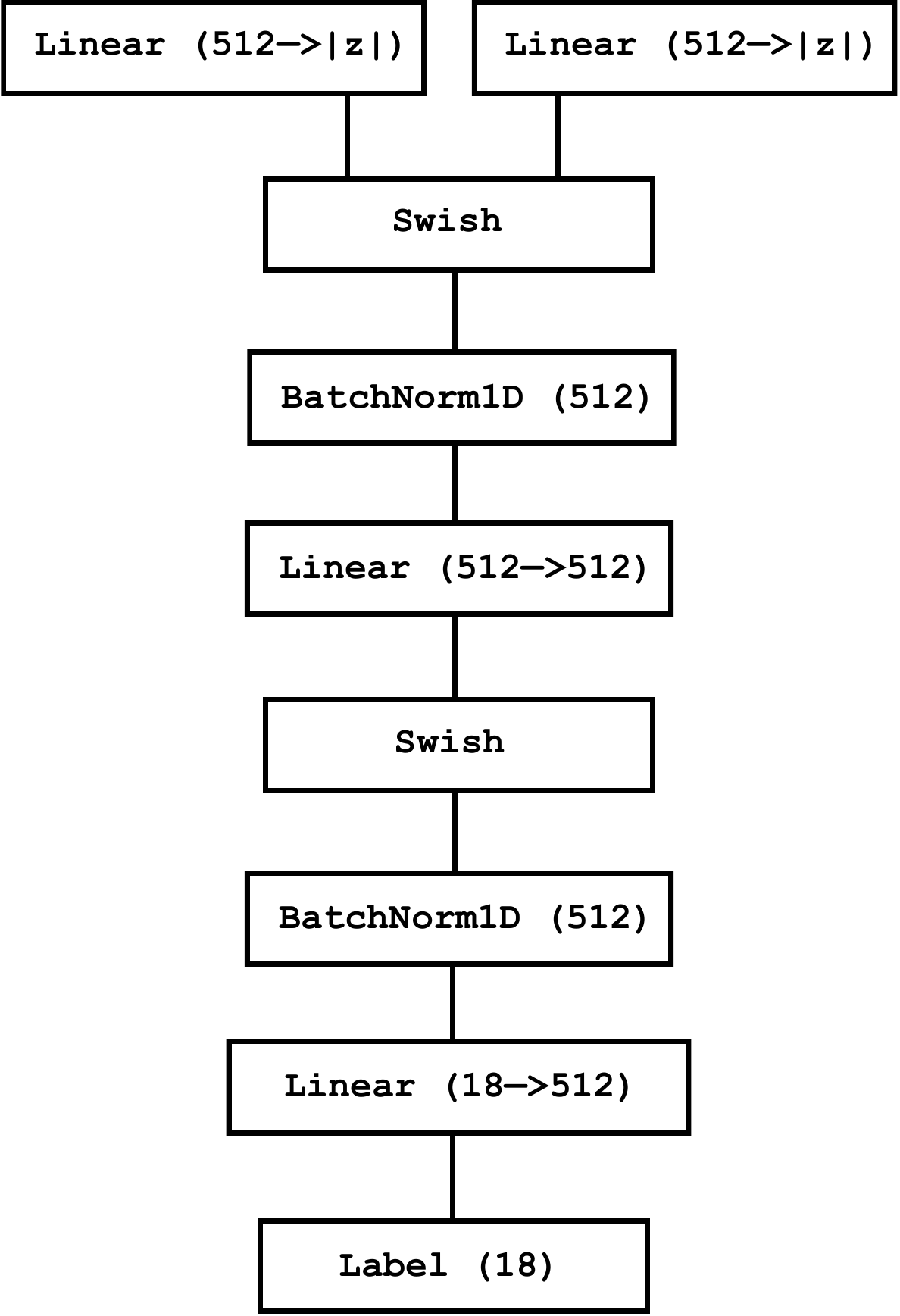}
        \caption{}
        \label{fig:arch:celeba:textencoder}
    \end{subfigure}\hspace{5mm}
    \begin{subfigure}[b]{.10\linewidth}
        \includegraphics[width=\linewidth]{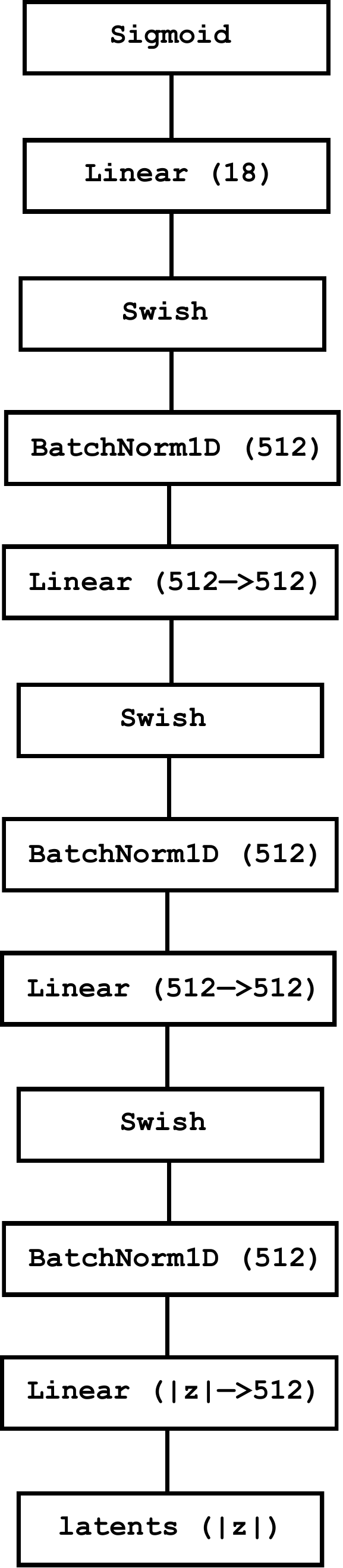}
        \caption{}
        \label{fig:arch:celeba:textdecoder}
    \end{subfigure}\hspace{5mm}
    \caption{\textit{MVAE architectures on CelebA:} (a) $q(z|x_{1})$, (b) $p(x_{1}|z)$, (c) $q(z|x_{2})$, (d) $q(x_{2}|z)$ where $x_{1}$ specifies an image and $x_{2}$ specifies a 18 attributes.}
    \label{fig:arch:celeba}
\end{figure}

\section{More on Weak Supervision}

In the main paper, we showed that we do not need that many complete examples to learn a good joint distribution with two modalities. Here, we explore the robustness of our model with missing data under more modalities. Using MVAE19 (19 modalities) on CelebA, we can conduct a different weak supervision experiment: given a complete multi-modal example $(x_{1}, ..., x_{19})$, randomly keep $x_{i}$ with probability $p$ for each $i=1, ..., 19$. Doing so for all examples in the training set, we simulate the effect of missing modalities beyond the bi-modal setting. Here, the number of examples shown to the model is dependent on $p$ e.g. $p = 0.5$ suggests that on average, 1 out of every 2 $x_{i}$ are dropped. We vary $p$ from $0.001$ to $1$, train from scratch, and plot (1) the prediction accuracy per attribute and (2) the various data log-likelihoods. From Figure \ref{fig:dropout_prediction}, we conclude that the method is fairly robust to missing data. Even with $p=0.1$, we still see accuracy close to the prediction accuracy with full data.

\begin{figure}[h!]
\centering
    \begin{subfigure}[b]{.24\linewidth}
        \includegraphics[width=\linewidth]{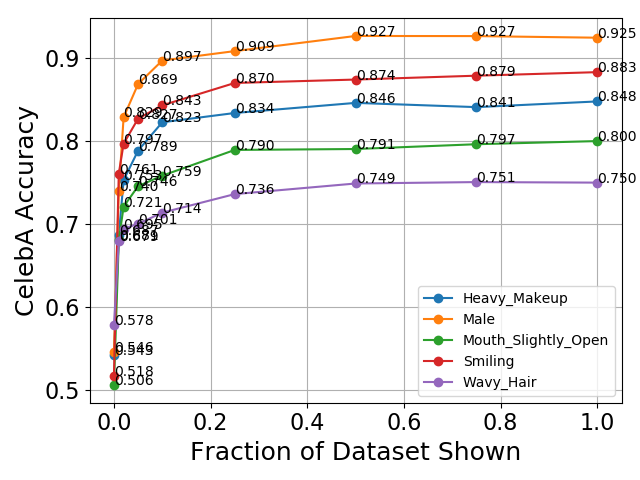}
        \caption{$p(y|x)$}
    \end{subfigure}
    \begin{subfigure}[b]{.24\linewidth}
        \includegraphics[width=\linewidth]{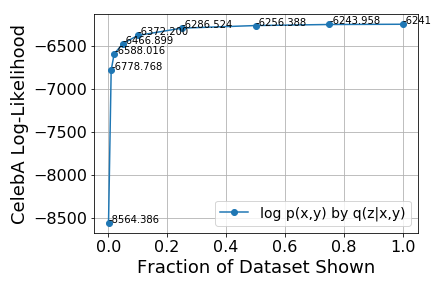}
        \caption{$\textup{log }p(x,y)$}
    \end{subfigure}
    \begin{subfigure}[b]{.24\linewidth}
        \includegraphics[width=\linewidth]{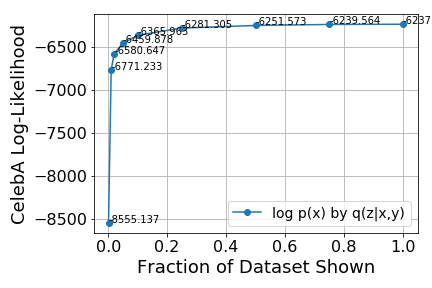}
        \caption{$\textup{log }p(x)$}
    \end{subfigure}
    \begin{subfigure}[b]{.24\linewidth}
        \includegraphics[width=\linewidth]{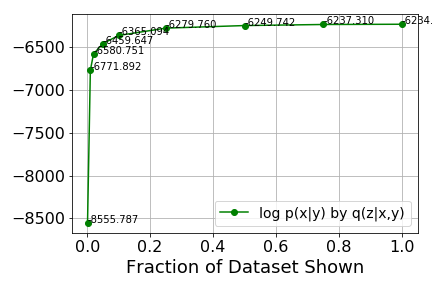}
        \caption{$\textup{log }p(x|y)$}
    \end{subfigure}
    \caption{We randomly drop input features $x_{i}$ with probability $p$. Figure (a) shows the effect of increasing $p$ from $0.001$ to $1$ on the  accuracy of sampling the correct attribute given an image. Figure (b) and (c) show changes in log marginal and log conditional approximations as $p$ increases. In all cases, we see close-to-best performance using only 10\% of the complete data.}
    \label{fig:dropout_prediction}
\end{figure}

\section{Table of Weak Supervision Results}

In the paper, we showed a series of plots detailing the performance the MVAE among many baselines on a weak supervision task. Here we provide tables detailing other numbers.

\begin{table}[tb]
\centering
\small
\begin{tabular}{ l|c|c|c|c|c|c|c|c|c }
    \toprule
    Model & 0.1\% & 0.2\% & 0.5\% & 1\% & 2\% & 5\% & 10\% & 50\% & 100\% \\
    \hline
    AE & 0.4143 & 0.5429 & 0.6448 & 0.788 & 0.8519 & 0.9124 & 0.9269 & 0.9423 & 0.9369 \\
    NN & 0.6618 & 0.6964 & 0.7971 & 0.8499 & 0.8838 & 0.9235 & 0.9455 & 0.9806 & 0.9857 \\
    LOGREG & 0.6565 & 0.7014 & 0.7907 & 0.8391 & 0.8510 & 0.8713 & 0.8665 & 0.9217 & 0.9255 \\
    RBM & 0.7152 & 0.7496 & 0.8288 & 0.8614 & 0.8946 & 0.917 & 0.9257 & 0.9365 & 0.9379 \\
    VAE & 0.2547 & 0.284 & 0.4026 & 0.6369 & 0.8016 & 0.8717 & 0.8989 & 0.9183 & 0.9311 \\
    JMVAE & 0.2342 & 0.2809 & 0.3386 & 0.6116 & 0.7869 & 0.8638 & 0.9051 & 0.9498 & 0.9572 \\
    MVAE & 0.2842 & 0.6254 & 0.8593 &0.8838 & 0.9394 & 0.9584 & 0.9711 & 0.9678 & 0.9681 \\
    \bottomrule
\end{tabular}
\caption{Performance of several models on MNIST with a fraction of paired examples. Here we compute the accuracy (out of 1) of predicting the correct digit in each image.}
\label{table:x_results}
\end{table}

\begin{table}[tb]
\centering
\small
\begin{tabular}{ l|c|c|c|c|c|c|c|c|c }
    \toprule
    Model & 0.1\% & 0.2\% & 0.5\% & 1\% & 2\% & 5\% & 10\% & 50\% & 100\% \\
    \hline
    NN & 0.6755 & 0.701 & 0.7654 & 0.7944 & 0.8102 & 0.8439 & 0.862 & 0.8998 & 0.9318 \\
    LOGREG & 0.6612 & 0.7005 & 0.7624 & 0.7627 & 0.7728 & 0.7802 & 0.8015 & 0.8377 & 0.8412 \\
    RBM & 0.6708 & 0.7214 & 0.7628 & 0.7690 & 0.7805 & 0.7943 & 0.8021 & 0.8088 & 0.8115 \\
    VAE & 0.5316 & 0.6502 & 0.7221 & 0.7324 & 0.7576 & 0.7697 & 0.7765 & 0.7914 & 0.8311 \\
    JMVAE & 0.5284 & 0.5737 & 0.6641 & 0.6996 & 0.7437 & 0.7937 & 0.8212 & 0.8514 & 0.8828 \\
    MVAE & 0.4548 & 0.5189 & 0.7619 & 0.8619 & 0.9201 & 0.9243 & 0.9239 & 0.9478 & 0.947 \\
    \bottomrule
\end{tabular}
\caption{Performance of several models on FashionMNIST with a fraction of paired examples. Here we compute the accuracy of predicting the correct class of attire in each image.}
\label{table:x_results}
\end{table}

\begin{table}[tb]
\centering
\small
\begin{tabular}{ l|c|c|c|c|c|c|c|c|c }
    \toprule
    Model & 0.1\% & 0.2\% & 0.5\% & 1\% & 2\% & 5\% & 10\% & 50\% & 100\% \\
    \hline
    JMVAE & 0.0603 & 0.0603 & 0.0888 & 0.1531 & 0.1699 & 0.1772 & 0.4765 & 0.4962 & 0.4955 \\
    MVAE & 0.09363 & 0.1189 & 0.1098 & 0.2287 & 0.3805 & 0.4289 & 0.4999 & 0.5121 & 0.5288 \\
    \bottomrule
\end{tabular}
\caption{Performance of several models on MultiMNIST with a fraction of paired examples. Here compute the average accuracy of predicting each digit correct (by decomposing the string into individual digits, at most 4).}
\label{table:x_results}
\end{table}

\section{Details on Weak Supervision Baselines}

The VAE used the same image encoder as the MVAE. JMVAE used identical architectures as the MVAE with a hyperparameter $\alpha = 0.01$. The RBM has a single layer with 128 hidden nodes and is trained using contrastive divergence. NN uses the image encoder and label/string decoder as in MVAE, thereby being a fair comparison to supervised learning. For MNIST, we trained each model for 500 epochs. For FashionMNIST and MultiMNIST, we trained each model for 100 epochs. All other hyperparameters were kept constant between models.

\section{More of the effects of sampling more ELBO terms}

In the main paper, we stated that with higher $k$ (sampling more ELBO terms), we see a steady decrease in variance. This drop in variance can be attributed to two factors: (1) additional un-correlated randomness from sampling more when reparametrizing for each ELBO \cite{burda2015importance}, or (2) additional ELBO terms to better approximate the intractable objective. Fig.~\ref{fig:mtest} (c) shows that the variance still drops consistently when using a fixed $\epsilon \sim N(0, 1)$ for computing all ELBO terms, indicating independent contributions of additional ELBO terms and additional randomness.

\begin{figure}[h!]
\centering
    \begin{subfigure}[b]{.32\linewidth}
        \centering
        \includegraphics[width=\linewidth]{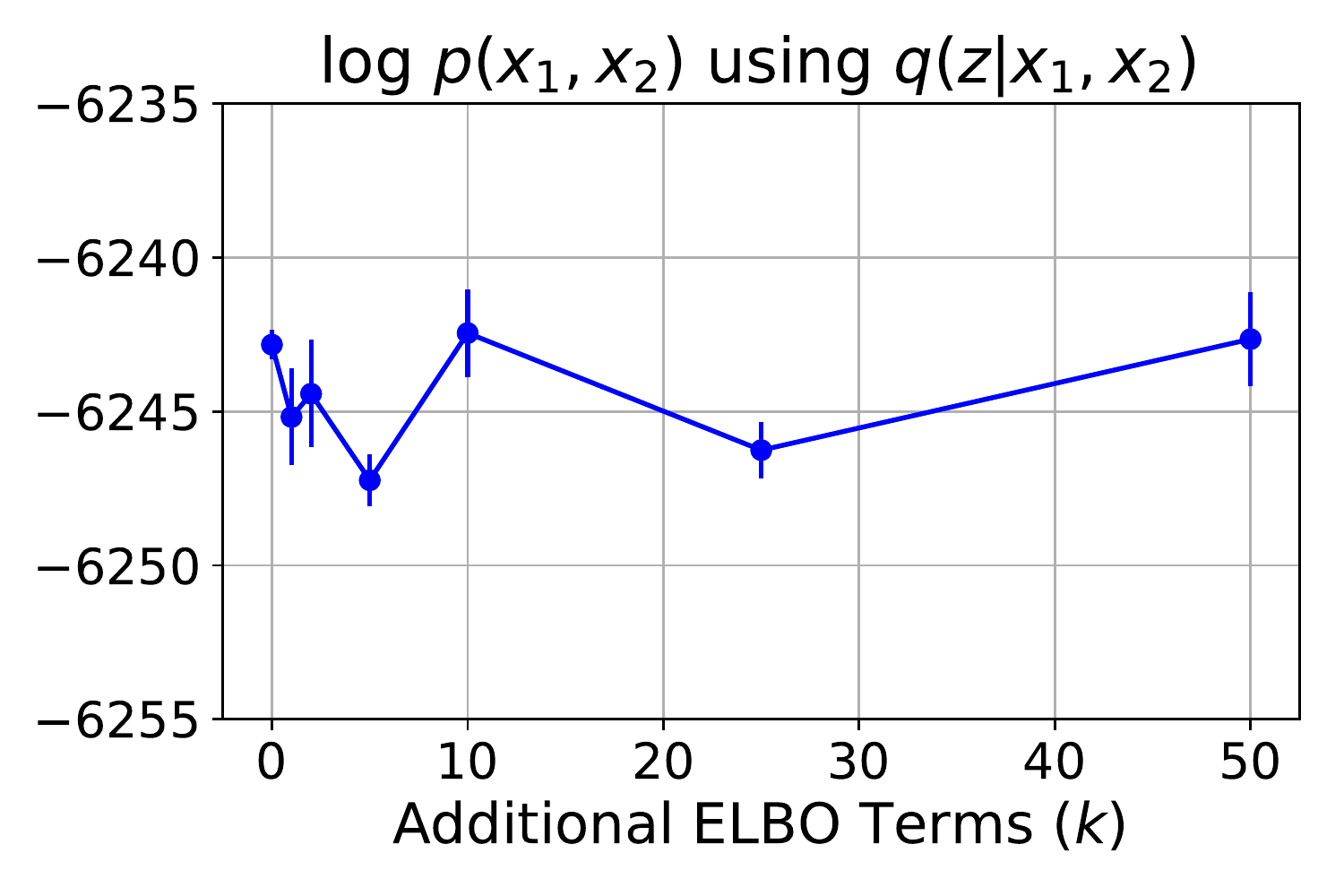}
        \caption{}
        \label{fig:mtest:joint}
    \end{subfigure}
    \begin{subfigure}[b]{.32\linewidth}
        \centering
        \includegraphics[width=\linewidth]{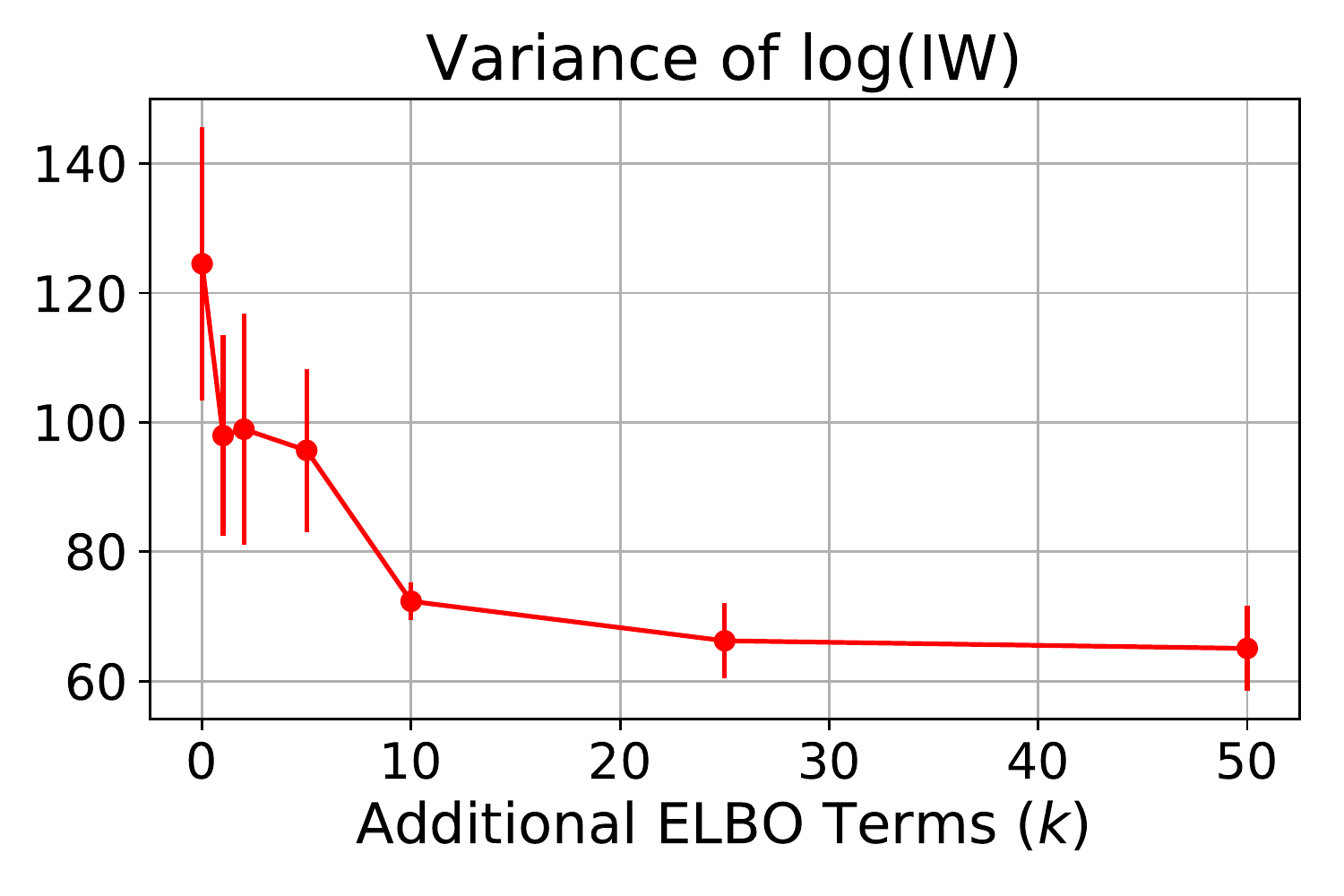}
        \caption{}
        \label{fig:mtest:var}
    \end{subfigure}
    \begin{subfigure}[b]{.32\linewidth}
        \centering
        \includegraphics[width=\linewidth]{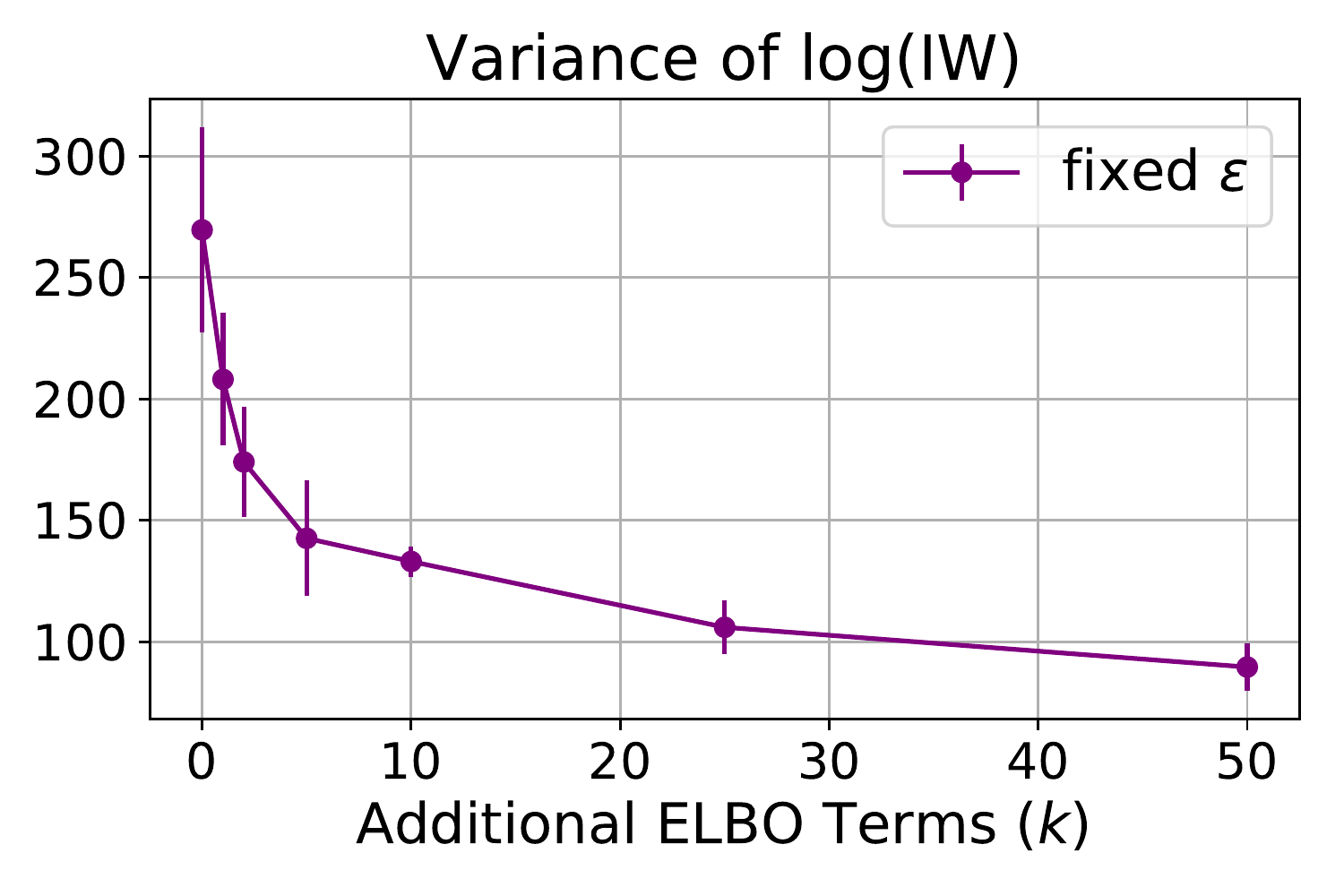}
        \caption{}
        \label{fig:mtest:var:epsilon}
    \end{subfigure}
    \caption{Effect of approximating the MVAE objective with more ELBO terms on (a) the joint log-likelihood and (b) the variance of the log importance weights over 3 independent runs. Similarly, (c) compute the variance but fixes a single $\epsilon \sim N(0, 1)$ when reparametrizing for each ELBO. (b) and (c) imply that switching from $k=0$ to $k=1$ greatly reduces the variance in the importance distribution defined by the inference network(s).}
    \label{fig:mtest}
\end{figure}

\section{More on the Computer Vision Transformations}

We copy Fig. 4 in the main paper but show more samples and increase the size of each image for visibility. The MVAE is able to learn all 6 transformations jointly under the PoE inference network.

\begin{figure}[h!]
    \centering
    \includegraphics[width=\linewidth]{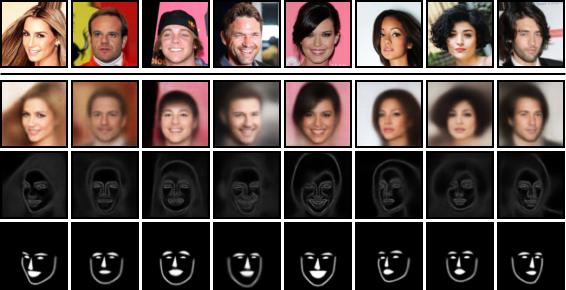}
    \caption{\textit{Edge Detection and Facial Landscapes}: The top row shows 8 ground truth images randomly chosen from the CelebA dataset. The second to fourth rows respectively plot the reconstructed image, edge, and facial landscape masks using the trained MVAE decoders and  $q(z|x_{1}, ..., x_{6})$.}
    \label{fig:vision_recon}
\end{figure}

\begin{figure}[h!]
    \centering
    \includegraphics[width=\linewidth]{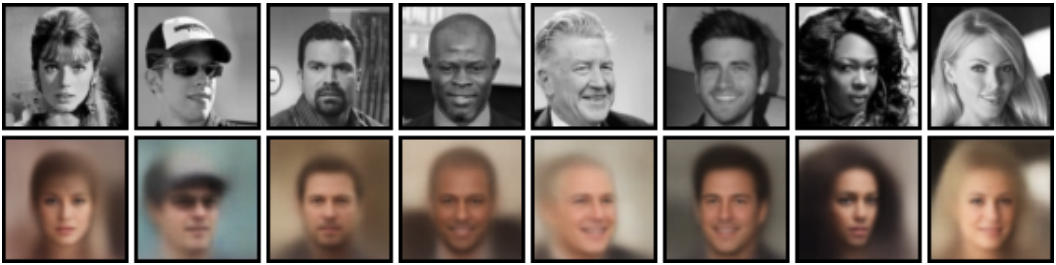}
    \caption{\textit{Colorization}: The top row shows ground truth grayscale images. The bottom row show reconstructed color images.}
    \label{fig:vision_color}
\end{figure}

\begin{figure}[h!]
    \centering
    \includegraphics[width=\linewidth]{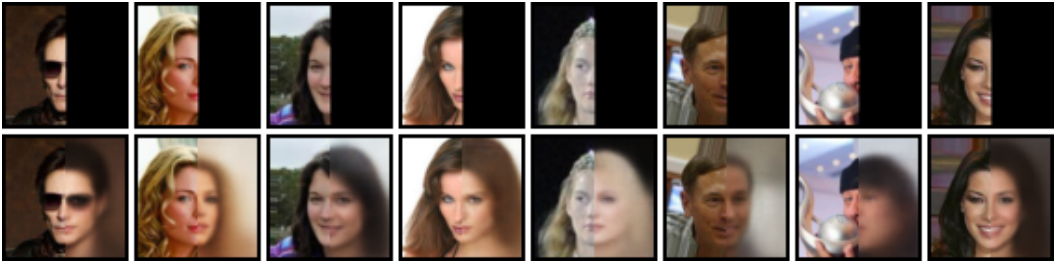}
    \caption{\textit{Fill in the Blank}: The top row shows ground truth CelebA images with half of each image obscured. The bottom row replaces the obscured part with a reconstruction.}
    \label{fig:vision_blank}
\end{figure}

\begin{figure}[h!]
    \centering
    \includegraphics[width=\linewidth]{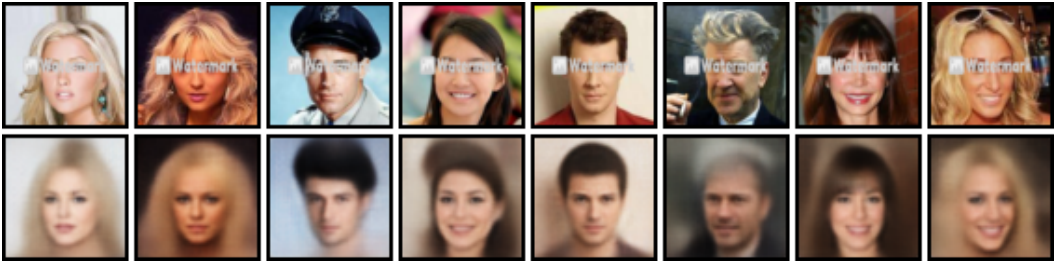}
    \caption{\textit{Removing Watermarks}: The top row shows ground truth CelebA images, each with an added watermark. The bottom row shows the reconstructed image with the watermark removed.}
    \label{fig:vision_watermark}
\end{figure}

\section{More on Machine Translation}

We provide more samples on (1) sampling joint (English, Vietnamese) pairs of sentences from the prior $N(0,1)$, (2) translating English to Vietnamese by sampling from $p(x_{\textup{en}}|z)$ where $z \sim q(z|x_{\textup{vi}})$, and (3) translating Vietnamese to English by sampling from $p(x_{\textup{vi}}|z)$ where $z \sim q(z|x_{\textup{en}})$. Refer to the main for analysis and explanation.

\begin{table}[tb]
\centering
\small
\begin{tabular}{ l|l }
  \toprule
  \textbf{Type} & \textbf{Sentence} \\
  \hline
  $x_{\textup{en}} \sim p(x_{\textup{en}}|z=z_0)$ & it's a problem . \\
  $x_{\textup{vi}} \sim p(x_{\textup{vi}}|z=z_0)$ & \foreignlanguage{vietnamese}{nó là một công việc .} \\
  $\textsc{GoogleTranslate}(x_{\textup{vi}})$ & it is a job . \\
  \hline
  $x_{\textup{en}} \sim p(x_{\textup{en}}|z=z_0)$ & we have an idea . \\
  $x_{\textup{vi}} \sim p(x_{\textup{vi}}|z=z_0)$ & \foreignlanguage{vietnamese}{chúng tôi có thể làm được .} \\
  $\textsc{GoogleTranslate}(x_{\textup{vi}})$ & we can do it . \\
  \hline
  $x_{\textup{en}} \sim p(x_{\textup{en}}|z=z_0)$ & And as you can see , this is a very powerful effect of word of mouth . \\
  $x_{\textup{vi}} \sim p(x_{\textup{vi}}|z=z_0)$ & \foreignlanguage{vietnamese}{và một trong những điều này đã xảy ra với những người khác , và chúng} \\
  & \foreignlanguage{vietnamese}{tôi đã có một số người trong số các bạn đã từng nghe về những điều này .} \\
  $\textsc{GoogleTranslate}(x_{\textup{vi}})$ & and one of these has happened to other people, and we've had \\
  & some of you guys already heard about this . \\
  \hline
  $x_{\textup{en}} \sim p(x_{\textup{en}}|z=z_0)$ & this is a photograph of my life . \\
  $x_{\textup{vi}} \sim p(x_{\textup{vi}}|z=z_0)$ & \foreignlanguage{vietnamese}{Đây là một bức ảnh .} \\
  $\textsc{GoogleTranslate}(x_{\textup{vi}})$ & this is a photo . \\
  \hline
  $x_{\textup{en}} \sim p(x_{\textup{en}}|z=z_0)$ & thank you . \\
  $x_{\textup{vi}} \sim p(x_{\textup{vi}}|z=z_0)$ & \foreignlanguage{vietnamese}{xin cảm ơn .} \\
  $\textsc{GoogleTranslate}(x_{\textup{vi}})$ & thank you . \\
  \hline
  $x_{\textup{en}} \sim p(x_{\textup{en}}|z=z_0)$ & i'm not kidding . \\
  $x_{\textup{vi}} \sim p(x_{\textup{vi}}|z=z_0)$ & \foreignlanguage{vietnamese}{tôi không nói đùa .} \\
  $\textsc{GoogleTranslate}(x_{\textup{vi}})$ & i am not joking . \\
  \bottomrule
\end{tabular}
\caption{A few examples of ``paired" reconstructions from a single sample $z_0 \sim q(z|x_{\textup{en}}, x_{\textup{vi}})$. Interestingly, many of the translations are not exact but instead capture a close interpretation of the true meaning. The MVAE tended to perform better on shorter sentences.}
\label{table:translation_examples}
\end{table}

\begin{table}[tb]
\centering
\small
\begin{tabular}{ l|l }
  \toprule
  \textbf{Type} & \textbf{Sentence} \\
  \hline
  $x_{\textup{en}} \sim p_{\textup{data}}$ & this was one of the highest points in my life. \\
  $x_{\textup{vi}} \sim p(x_{\textup{vi}}|z(x_{\textup{en}}))$ & \foreignlanguage{vietnamese}{Đó là một gian tôi vời của cuộc đời tôi.} \\
  $\textsc{GoogleTranslate}(x_{\textup{vi}})$ & It was a great time of my life. \\
  \hline
  $x_{\textup{en}} \sim p_{\textup{data}}$ & i am on this stage . \\
  $x_{\textup{vi}} \sim p(x_{\textup{vi}}|z(x_{\textup{en}}))$ & \foreignlanguage{vietnamese}{tôi đi trên sân khấu .} \\
  $\textsc{GoogleTranslate}(x_{\textup{vi}})$ & me on stage .\\
  \hline
  $x_{\textup{en}} \sim p_{\textup{data}}$ & do you know what love is ? \\
  $x_{\textup{vi}} \sim p(x_{\textup{vi}}|z(x_{\textup{en}}))$ & \foreignlanguage{vietnamese}{Đó yêu của những ?} \\
  $\textsc{GoogleTranslate}(x_{\textup{vi}})$ & that's love ?\\
  \hline
  $x_{\textup{en}} \sim p_{\textup{data}}$ & today i am 22 . \\
  $x_{\textup{vi}} \sim p(x_{\textup{vi}}|z(x_{\textup{en}}))$ & \foreignlanguage{vietnamese}{hãy nay tôi sẽ tuổi .} \\
  $\textsc{GoogleTranslate}(x_{\textup{vi}})$ & I will be old now . \\
  \hline
  $x_{\textup{en}} \sim p_{\textup{data}}$ & so i had an idea . \\
  $x_{\textup{vi}} \sim p(x_{\textup{vi}}|z(x_{\textup{en}}))$ & \foreignlanguage{vietnamese}{tôi thế tôi có có thể vài tưởng tuyệt .} \\
  $\textsc{GoogleTranslate}(x_{\textup{vi}})$ & I can have some good ideas . \\
  \hline
  $x_{\textup{en}} \sim p_{\textup{data}}$ & the project's also made a big difference in the lives of the <unk> . \\
  $x_{\textup{vi}} \sim p(x_{\textup{vi}}|z(x_{\textup{en}}))$ & \foreignlanguage{vietnamese}{tôi án này được ra một Điều lớn lao cuộc sống của chúng người} \\
  & \foreignlanguage{vietnamese}{sống chữa hưởng .} \\
  $\textsc{GoogleTranslate}(x_{\textup{vi}})$ & this project is a great thing for the lives of people who live and thrive . \\
  \bottomrule
\end{tabular}
\caption{A few examples of Vietnamese MVAE translations of English sentences sampled from the empirical dataset, $p_{\textup{data}}$. We use Google translate to re-translate back to English.}
\end{table}

\begin{table}[tb]
\centering
\small
\begin{tabular}{ l|l }
  \toprule
  \textbf{Type} & \textbf{Sentence} \\
  \hline
  $x_{\textup{vi}} \sim p_{\textup{data}}$ & \foreignlanguage{vietnamese}{Đó là thời điểm tuyệt vọng nhất trong cuộc đời tôi .} \\
  $x_{\textup{en}} \sim p(x_{\textup{en}}|z(x_{\textup{vi}}))$ & this is the most bad of the life . \\
  $\textsc{GoogleTranslate}(x_{\textup{vi}})$ & it was the most desperate time in my life . \\
  \hline
  $x_{\textup{vi}} \sim p_{\textup{data}}$ & \foreignlanguage{vietnamese}{cảm ơn .} \\
  $x_{\textup{en}} \sim p(x_{\textup{en}}|z(x_{\textup{vi}}))$ & thank . \\
  $\textsc{GoogleTranslate}(x_{\textup{vi}})$ & thank you . \\
  \hline
  $x_{\textup{vi}} \sim p_{\textup{data}}$ & \foreignlanguage{vietnamese}{trước tiên , tại sao chúng lại có ấn tượng xấu như vậy ?} \\
  $x_{\textup{en}} \sim p(x_{\textup{en}}|z(x_{\textup{vi}}))$ & first of all, you do not a good job ? \\
  $\textsc{GoogleTranslate}(x_{\textup{vi}})$ & First, why are they so bad? \\
  \hline
  $x_{\textup{vi}} \sim p_{\textup{data}}$ & \foreignlanguage{vietnamese}{Ông ngoại của tôi là một người thật đáng <unk> phục vào thời ấy .} \\
  $x_{\textup{en}} \sim p(x_{\textup{en}}|z(x_{\textup{vi}}))$ & grandfather is the best experience of me family . \\
  $\textsc{GoogleTranslate}(x_{\textup{vi}})$ & My grandfather was a worthy person at the time . \\
  \hline
  $x_{\textup{vi}} \sim p_{\textup{data}}$ & \foreignlanguage{vietnamese}{Đứa trẻ này 8 tuổi .} \\
  $x_{\textup{en}} \sim p(x_{\textup{en}}|z(x_{\textup{vi}}))$ & this is man is 8 years old . \\
  $\textsc{GoogleTranslate}(x_{\textup{vi}})$ & this child is 8 years old . \\
  \bottomrule
\end{tabular}
\caption{A few examples of English MVAE translations of Vietnamese sentences sampled from the empirical dataset, $p_{\textup{data}}$. We use Google translate to translate to English as a ground truth.}
\end{table}

\end{document}